\def\reals{\mathbb{R}}
\def\tr{\mathrm{tr}\;}
\newtheorem{definition}{Definition}
\newtheorem{theorem}{Theorem}
\newtheorem{lemma}{Lemma}
\newtheorem{proposition}{Proposition}
\newtheorem{corollary}{Corollary}
\newtheorem{remark}{Remark}
\newtheorem{example}{Example}
\renewcommand{\vec}[1]{\mathbf{#1}}
\newcommand{\trans}{\intercal}
\newcommand{\diag}{\mathrm{diag}}
\DeclareMathOperator{\kernel}{Ker}
\DeclareMathOperator{\vspan}{span}
\DeclareMathOperator*{\argmin}{arg\,min}
\DeclareMathOperator*{\rank}{rank}
\newcommand*{\algrule}[1][\algorithmicindent]{\makebox[#1][l]{\hspace*{.5em}\vrule height .75\baselineskip depth .25\baselineskip}}%
\def\ALG@printindent{%
    \ifnum \theALG@nested>0
    \ifx\ALG@text\ALG@x@notext
    \addvspace{-3pt}
    \else
    \unskip
    \ALG@printindent@tempcnta=1
    \loop
    \algrule[\csname ALG@ind@\the\ALG@printindent@tempcnta\endcsname]%
    \advance \ALG@printindent@tempcnta 1
    \ifnum \ALG@printindent@tempcnta<\numexpr\theALG@nested+1\relax
    \repeat
    \fi
    \fi
}%
\patchcmd{\ALG@doentity}{\noindent\hskip\ALG@tlm}{\ALG@printindent}{}{\errmessage{failed to patch}}
\title{A proof of convergence of multi-class logistic regression network}
\author{Marek Rychlik\\
  University of Arizona\\
  Department of Mathematics, 617 N Santa Rita Rd, P.O. Box 210089\\
  Tucson, AZ 85721-0089, USA\\}
\date{\today}
\date{\today}
\subjclass[2010]{%
  92B20, %
  68T05, %
  82C32%
}
\begin{document}
\maketitle
\begin{abstract}
  This paper revisits the special type of a neural network known under
  two names. In the statistics and machine learning community it is
  known as a multi-class logistic regression neural network. In the
  neural network community, it is simply the soft-max layer. The
  importance is underscored by its role in deep learning: as the last
  layer, whose autput is actually the classification of the input
  patterns, such as images. Our exposition focuses on mathematically
  rigorous derivation of the key equation expressing the gradient. The
  fringe benefit of our approach is a fully vectorized expression,
  which is a basis of an efficient implementation. The second result
  of this paper is the positivity of the second derivative of the
  cross-entropy loss function as function of the weights. This result
  proves that optimization methods based on convexity may be used to
  train this network. As a corollary, we demonstrate that no
  $L^2$-regularizer is needed to guarantee convergence of gradient
  descent, provided that a global minimum of the loss function exists.
  We also provide an effective bound on the rate of convergence for
  two classes.
\end{abstract}

\renewcommand{\include}[1]{\input{#1}}

\section{Notation and definitions}
The multi-class logistic regression network is a neural network
which takes an input vector $\vec{x}\in\reals^D$ and produces
an activation vector $\vec{a}\in\reals^C$ by a linear
transformation
\[ \vec{a} = \vec{W}\,\vec{x},\]
where $\vec{W}=[w_{jk}]$ is an $C\times D$ matrix of weights.  The
vector space of such matrices will be denoted by
$L\left(\reals^D,\reals^C\right)$ and identified with the space of
linear transformations \[\vec{W}: \reals^D\to\reals^C.\]

The activations are subsequently transformed by the \emph{soft-max function}
$\boldsymbol\sigma:\reals^C\to\reals^C$ given by the formula
\begin{equation}
  \label{eqn:sigma}
  \sigma_i(\vec{u}) = \frac{e^{u_i}}{\sum_{j=1}^C e^{u_j}}.
\end{equation}
Thus,
\[ \vec{y} = \boldsymbol\sigma\left( \vec{W}\,\vec{x} \right) \] and
clearly $\vec{y}\in\reals^C$. The number $C$ is the number of classes
into which the input vectors will be classified into. For a
\emph{training vector} $\vec{x}$ its classification is known and it is
given by a target vector $\vec{t}\in\{0,1\}^C \subset\reals^C$, where
$t_k=1$ iff vector $\vec{x}$ belongs to class $k$.  This kind of
encoding of classes, or categories, by vectors of the standard basis
of $\reals^C$ is called \emph{one-hot encoding}.  Thus
\[ \sum_{k=1}^C t_k = 1. \]
In order to train the network, we need the \emph{training set}
consisting of $N$ sample input vectors
\[ \vec{x}^{(n)}\in\reals^D,\qquad n=1,2,\ldots, N \]
and $N$ corresponding target vectors
\[ \vec{t}^{(n)} \in \reals^C. \]

The loss function to be minimized is the \emph{cross-entropy} loss function,
given by the formula:
\begin{equation}
  \label{eqn:loss-function}
  L(\vec{W}) = -\sum_{n=1}^N \sum_{i=1}^C t_i^{(n)} \log y_i^{(n)}.
\end{equation}
The problem of training is the problem of finding the optimal weight matrix
$\widehat{\vec{W}}$ which minimizes $L(\vec{W})$:
\[ \widehat{\vec{W}} = \argmin_{\vec{W}\in L(\reals^D,\reals^C)} L(\vec{W}). \]

The natural question arises, whether the minimum exists and whether it is
unique. It is known that the simplistic answer to this question is ``no''
because shifting the weights by a constant depending on $n$ only
does not change the value of $L$. More precisely, if
\[\tilde{\vec{W}} = \vec{W} + \mathbbm{1}\,\vec{c}^\trans\]
then $L(\tilde{\vec{W}}) = L(\vec{W})$, and, more strongly
\[ \tilde{\vec{y}}^{(n)} = \vec{y}^{(n)}\]
where
\begin{align*}
  \vec{y}^{(n)} &= \boldsymbol\sigma\left(\vec{W}\,\vec{x}^{(n)}\right).\\
  \tilde{\vec{y}}^{(n)} &= \boldsymbol\sigma\left(\tilde{\vec{W}}\,\vec{x}^{(n)}\right).  
\end{align*}
This is the consequence of the following identity: for every $c\in\reals$
\[\boldsymbol\sigma(\vec{u} + c\, \mathbbm{1}) = \boldsymbol\sigma(\vec{u}).\]

This brief paper answers this question in the most satisfactory
fashion, giving sufficient conditions for $L$ to be a strictly convex function
on the subspace of those weight matrices $\vec{W}$ for which the sum of every
column is $0$, i.e.
\[ \mathbbm{1}^\trans\, \vec{W} = 0. \]
This condition guarantees the uniqueness of the global minimum (subject to the condition above),
and convergence of optimization algorithms which depend on strict convexity,
but \textbf{does not guarantee the existence} of a global minimum. 
\begin{example}
  It is easy to construct a strictly convex function $L:\reals\to\reals$ which
  does not have a global minimum, e.g.
  \[L(x) = \log(\cosh x)+2\,x \]
  We note that $L'(x)=\tanh x + 2 \in (1,3)$. Hence $L$ is strictly increasing.
\end{example}
One way to choose the matrix $\vec{W}$ is to shift every column by
subtracting the mean, which will have the effect of choosing the
equivalent weight matrix with miniumum Frobenius norm:
\[\sum_{j=1}^C\sum_{k=1}^D w_{jk}^2 = \min.\]
A practical application of the above observation is that we can subtract
the mean from columns as part of the iteration process, when implementing
an optimization method, such as gradient descent, which should help with
stability of the method.

\subsection{Related reading}
Many related techniques are described in
\cite{logistic-regression-wikipedia,multinomial-logistic-regression-wikipedia}.
The softmax function use for pattern recognition is also discussed in
\cite{bishop-book} on pp.~215 and pp.~238--245, and in
\cite{murphy-book} on pp.~252--254.


\section{On the meaning of loss}
In this short section we relate probability theory, statistics and
machine learning, hoping that this will be helpful to some readers.
We will follow the interpretation of the loss function is as negative
log-likelihood, using conventional statistical modeling
assumptions. The fundamental assumption is that the neural network is
capable of outputting exact (conditional) probabilities of certain
events conditioned upon the observed data and the parameters of the
model.

In order to give our considerations a solid, formal, probability space
we will assume the existence of a number of random variables on a probability space
$(\Omega,\Sigma,P)$. Thus, we assume that each of our samples
$\vec{x}^{(n)}$ is a value of a vector-value random variable
\[\vec{X}^{(n)}:\Omega\to\reals^D.\]
i.e. $\vec{x}^{(n)} = \vec{X}^{(n)}(\omega)$, where $\omega$ is the
outcome (elementary event) associated with our experiment. We also
assume that there are (scalar and discrete) random variables
\[L^{(n)}: \Omega\to \{1,2,\ldots,C\}\subseteq\reals\] which assign
labels, i.e. classes, to the samples. We further assume that for a
fixed sequence of labels $\ell^{(n)}$, $n=1,2,\ldots,N$, the joint
probability distribution of random variables $L^{(n)}$ is given by:
\[ P\left(L^{(n)}(\omega)=\ell^{(n)}\;\text{for $n=1,2,\ldots,N$}
    \,\bigg|\, \mathcal{X},\vec{W} \right) =
  \prod_{n=1}^N y_{\ell^{(n)}}^{(n)}.
\]
where for $\ell=1,2,\ldots,C$
\[ y_\ell^{(n)} = \sigma_\ell\left(\vec{W}\,\vec{x}^{(n)}\right) \]
is the output of our neural network, and $\mathcal{X}$ represents our data:
\[ \mathcal{X} = \{ \vec{x}^{(n)}\,:\, n=1,2,\ldots,N \}. \]

It should be noted that we \textbf{do not} assume that $\vec{X}^{(n)}$
are independent random variables, and neither we assume that they are
identically distributed. We only assume that the labeling variables
$L^{(n)}$ are \textbf{conditionally independent}, given the
sample, i.e. the values of the random variables $\vec{X}^{(n)}$. The
conditioning on the data $\mathcal{X}$ is equivalent to the
conditioning on the $\sigma$-algebra generated by random variables
$\vec{X}^{(n)}$, $n=1,2,\ldots,N$, usually denoted
$\sigma\left(\vec{X}^{(1)},\vec{X}^{(2)},\ldots,\vec{X}^{(N)}\right)$,
with a bit of confusion related to the occurrence of $\sigma$ in
multiple contexts.

It is also easy to define random variables modeling the one-hot
encoded labels. Thus, we define random variables
$T_\ell^{(n)}:\Omega\to\{0,1\}\subseteq\reals$, $\ell=1,2,\ldots,C$,
by:
\[ T_\ell^{(n)} = 1 \iff L^{(n)} = \ell.\]
Hence, we also have vector-value random variables assigning the one-hot encoded labels
\[ \vec{T}^{(n)} : \Omega \to \{0,1\}^C \subseteq\reals^C \]
whose scalar components are $T_i^{(n)}$. By definition
\[ y_\ell^{(n)} = P\left( T_\ell^{(n)} = t_\ell^{(n)} \,\big|\, \mathcal{X}, \vec{W} \right). \]
It follows easily that the joint distribution of $\vec{T}^{(n)}$, $n=1,2,\ldots,N$, is given by:
\begin{equation}
  \label{eqn:probability-model}
  P\left(\vec{T}^{(n)}(\omega) =\vec{t}^{(n)}\;\text{for $n=1,2,\ldots,N$}
    \,\bigg|\, \mathcal{X},\vec{W} \right) =
  \prod_{n=1}^N \prod_{\ell=1}^C \left(y_\ell^{(n)}\right)^{t_\ell^{(n)}}.
\end{equation}
The left-hand side is a function of the sample, i.e. it is a statistic. Given the above completely
rigorous probability model, we obtain formula \eqref{eqn:loss-function} by
taking the negative logarithm of \eqref{eqn:probability-model}. Thus:
\[ L(\vec{W}) = -\log P\left(\vec{T}^{(n)}(\omega)=\vec{t}^{(n)}\;\text{for $n=1,2,\ldots,N$}
    \,\bigg|\, \mathcal{X},\vec{W} \right). \] 
It should be noted that likelihood is not defined in terms of the above conditional probability
but rather the probability of \emph{all observed data}:
\[\mathcal{D} = \left\{\mathcal{X},\mathcal{L}\right\}\]
where
\[ \mathcal{L} = \left\{\ell^{(n)}\,:\,n=1,2,\ldots,N\right\}\] are the (known)
labels of our data. Thus the likelihood is the joint, conditional
distribution:
\[ P\left(\mathcal{X}, \mathcal{L}\,\big|\, \vec{W} \right). \]
However,
\[
  P\left(\mathcal{X}, \mathcal{L} \big|\,\vec{W}\right)
  = P\left(\mathcal{L}\,\big|\,\mathcal{X},\vec{W}\right)\,
  P\left(\mathcal{X}\,\big|\,\vec{W}\right).
\]
where $P\left(\mathcal{X}\,\big|\,\vec{W}\right)$ is the conditional
distribution of data for given weights. It is reasonable to assume
that data is independent of the weights, as the process of drawing
samples $\mathcal{X}$ and determining true labels $\mathcal{L}$ is
unrelated to picking weights $\vec{W}$ and hence
$ P\left(\mathcal{X}\,\big|\,\vec{W}\right) =
P\left(\mathcal{X}\right)$.  There the loss function we study can be
written, up to an additive term $-\log P(\mathcal{X})$ which does not
depend on $\vec{W}$, as:
\[ L(\vec{W}) = -\log P\left(\mathcal{L}\,\big|\,\mathcal{X},\vec{W}\right). \]
Thus, minimizing $L(\vec{W})$ is equivalent to picking weights which
maximize the probability of all observed data.

Justifying the term \emph{cross-entropy} for the loss function would
require a deeper dive into information theory, especially in view of
the fact that its use in the current context is more traditional than
correct. The analogous mistake would be to confuse sample means with
means (i.e. expected values) of random variables.

Let us comment on the use of the $L^2$ regularizer. This is a modification
of $L(\vec{W})$ in the following way:
\[ \tilde{L}(\vec{W}) = L(\vec{W}) + \dfrac{1}{2}\alpha\|\vec{W}\|^2 \]
where $\alpha > 0$. Let us assume that we draw the weights $\vec{W}$
from a normal distribution, more precisely:
\[ -\log P(\vec{W}) = \dfrac{1}{2}\alpha \|\vec{W}\|^2.\]
Rigorously we can establish this formula for the joint distribution
of $\mathcal{X}$, $\mathcal{L}$ and $\vec{W}$:
\[ P(\mathcal{X},\mathcal{L},\vec{W}) =
  P\left(\mathcal{L}\,\big|\,\mathcal{X},\vec{W}\right)\,
  P\left(\mathcal{X}\,\big|\,\vec{W}\right)\,P(\vec{W}) \]
Assuming as before that $P(\mathcal{X}\,\big|\,\vec{W}) = P(\mathcal{X})$ we arrive at:
\[ \tilde L(\vec{W}) = - \log P(\mathcal{L}\,\big|\,\mathcal{X},\vec{W}) + \log P(\vec{W})
  = -\log P(\mathcal{X},\mathcal{L},\mathcal{W}) \]
Thus, we have proven that the modified loss function satisfies:
\[ \tilde L(\vec{W}) = - \log P(\mathcal{X},\mathcal{L},\mathcal{W}). \]
Thus $\tilde{L}(\vec{W})$ no longer expresses the desired likelihood!
Hence, adding the regularizer is not mathematically correct. We don't
think it should be added merely for increased numerical stability of
the gradient method, at the expense of converging to incorrect value
of $\vec{W}$.  In the current paper we will focus on the
mathematically correct formulation and will see where it leads us.  We
would also like to bring to the readers's attention that in our
derivations we did not make use of the Bayes formula.


\section{The gradient and critical points of $L$}
\label{section:critical-points}
In this section we summarize known conditions for $\vec{W}$ to be a
critical points of $L$. The emphasis of our exposition is on
coordinate-free formulations, which naturally yields vectorized
formulas for the relevant quantities, which is of practical
significance in computation, and helps understanding some theoretical
aspects of the method.
\subsection{The gradient for a single sample}
To avoid the complexity of the notation of the general case, we assume
$N=1$, i.e. our dataset consists of a single sample.  This allows us
to avoid the superscript $(n)$. It will then be easy to utilize the
results for arbitrary sample sizes. Thus, in this section we consider
the truncated loss function:
\begin{equation}
  \label{eqn:truncated-loss-function}
  L(\vec{W}) = -\sum_{i=1}^C t_i \log y_i.
\end{equation}
where $\vec{y}\in\reals^C$ depends on $\vec{W}$. The summation over all
samples yields the result for the entire training set. The manner in
which we derive the formulas may differ from a typical derivation in
machine learning texts, in that it relies upon the Chain Rule for
\emph{Fr\'echet derivatives}, rather than calculation of partials with
respect of individual weights $w_{jk}$.\footnote{It should be remembered that
Fr\'echet derivatives are closely related to \emph{Jacobi matrices},
but they are \textbf{not the same}, as it will be quite apparent in
our calculations. Fr\'echet derivative is a linear transformation, and
upon the choice of a basis, or, in the case of $\reals^n$, when the standard
basis is used, this linear transformation is identified with the Jacobi matrix.}

With this approach, we automatically derive \emph{vectorized} formulas
for the gradient of $L$, which subsequently leads to very efficient
implementation of the training algorithm, as all critical operations
are simply matrix products.

We also allow the target vector $\vec{t}\in\reals^C$, without necessarily
requiring that $t_k\in\{0,1\}$, but we still require
\[ \sum_{k=1}^C t_k = 1. \]
From the point of view of applications, this generalization is useful when
the classification of the inputs is ambiguous. For instance, we can
use several humans to classify the inputs, in which case the humans may classify
the inputs differently. Then we could assign to $t_k$ the fraction of humans
who assign the input to class $k$.

We use the following representation of the loss function $L$:
\[ L(\vec{W}) = Q_{\vec{t}}(-\boldsymbol\log(\boldsymbol\sigma( P_{\vec{x}} (\vec{W}) ) ) )\]
where $\boldsymbol\log$ is the vectorized version of $\log$ (coordinatewise $\log$) and
\begin{align*}
  P_{\vec{x}}(\vec{W}) &= \vec{W}\,\vec{x}, \\
  Q_{\vec{t}}(\vec{z}) &= \vec{t}^\trans\,\vec{z}.
\end{align*}
is a vector-valued \emph{linear} operator \emph{on matrices}, consisting
in multiplying a matrix by $\vec{x}$ on the right. Thus
\[ L = Q_{\vec{t}}\circ ((-\boldsymbol\log) \circ \boldsymbol\sigma) \circ P_\vec{x}, \]
which is a composition involving linear operators $Q_{\vec{t}}$ and $P_{\vec{x}}$
and two non-linear transformations, $\boldsymbol\log$ and
$\boldsymbol\sigma$. Let $\boldsymbol\rho = (-\boldsymbol\log)\circ \boldsymbol\sigma$.
It will be beneficial to think of $L$ as
\[ L = Q_{\vec{t}}\circ \boldsymbol\rho \circ P_{\vec{x}} \]
which is a composition with only one non-linear term.
\begin{remark}
  This form is particularly useful for calculating higher derivatives
  of $L$, specifically, the Hessian of $L$.
\end{remark}
The Chain Rule yields the Fr\'echet derivative
of the composition:
\[DL  = DQ_{\vec{t}}\, D\boldsymbol\rho\, DP_{\vec{x}} \]
where the intermediate Fr\'echet derivatives are evaluated at respective
intermediate values of the composition. The more detailed version of the
formula is:
\[DL  = \left(DQ_{\vec{t}}\circ\boldsymbol\rho\circ P_{\vec{x}}\right)\, \left(D\boldsymbol\rho\circ P_{\vec{x}}\right)\, DP_{\vec{x}} \]
It should be noted that for a function $\vec{F}:\reals^n\to\reals^m$,
\[ D\vec{F}: \reals^n \to L(\reals^n,\reals^m), \]
i.e. it is an operator-valued function on $\reals^n$.

The Fr\'echet derivative of a
linear transformation is the transformation itself. Thus
\begin{align}
DQ_{\vec{t}} &= Q_{\vec{t}}\\
DP_{\vec{x}} &= P_{\vec{x}}
\end{align}
regardless of the argument. 

We state one more useful formula. For any vectorized scalar function,
evaluated elementwise, i.e. function $\vec{F}:\reals^n\to\reals^n$
given by:
\[ \vec{F}(\vec{x}) = (f(x_1),f(x_2),\ldots,f(x_n)) \]
is a diagonal matrix:
\[ D\vec{F}(\vec{x}) = \begin{bmatrix}
    f'(x_1) & 0 & \ldots & 0 \\
    0       & f'(x_2) & \ldots & 0 \\
    \vdots  & \vdots & \ddots & \vdots \\
    0       & 0      &  \ldots     & f'(x_n)
  \end{bmatrix}
  = \diag(f'(\vec{x})).
\]
where $\diag$ is a MATLAB-like operator converting a vector to a diagonal
matrix. We make an observation that $\diag$ itself is a linear operator
from vectors to matrices.

In summary,
\begin{equation}
  \label{eqn:derivative-formula}
  DL(\vec{W}) = Q_{\vec{t}} \, D\boldsymbol\rho(\vec{a})\, P_{\vec{x}}
\end{equation}
Let us compute the derivative of $\boldsymbol\rho$. Firstly,
\begin{equation}
  \label{eqn:rho-formula}
  \boldsymbol\rho(\vec{u}) = -\vec{u} + \left(\log\left(\sum_{k=1}^C
      e^{u_i}\right)\right)\,\mathbbm{1}
\end{equation}
where $\mathbbm{1}=(1,1,\ldots,1)$ is a (column) vector of 1's.  This
implies easily, using a combination of techniques already mentioned:
\begin{align*}
  D\boldsymbol\rho(\vec{u}) \vec{h}
  &= -\vec{h} + \left( \frac{1}{\sum_{k=1}^C e^{u_i}} \left(e^{\vec{u}}\right)^\trans\,\vec{h} \right) \mathbbm{1} \\
  &= -\vec{h} + (\boldsymbol\sigma(\vec{u})^\trans\,\vec{h})\mathbbm{1} 
  = -\vec{h} + \mathbbm{1}\,(\boldsymbol\sigma(\vec{u})^\trans\,\vec{h})
  = \left(-\vec{I} + \mathbbm{1}\,\boldsymbol\sigma(\vec{u})^\trans\right)\vec{h}    
\end{align*}
where $\vec{I}$ is the $C\times C$ identity matrix. Thus, in short,
\begin{equation}
  \label{eqn:rho-derivative}
  D\boldsymbol\rho(\vec{u}) = -\vec{I} + \mathbbm{1}\,\boldsymbol\sigma(\vec{u})^\trans.
\end{equation}

Finally, carefully looking
at the formula \eqref{eqn:derivative-formula} and equating matrix product
with composition of linear transformations (this convention should be familiar from
linear algebra), we obtain
\begin{align*}
  \label{eqn:truncated-gradient}
  DL(\vec{W})\vec{V}
  &= \vec{t}^\trans\,\left(-\vec{I} + \mathbbm{1}\,\boldsymbol\sigma(\vec{W}\,\vec{x})^\trans\right)\,\vec{V}\,\vec{x}\\
  &= -\vec{t}^\trans\,\vec{V}\,\vec{x} + (\vec{t}^\trans\,\mathbbm{1})\,\vec{y}^\trans\,\vec{V}\,\vec{x} \\
  &= -\vec{t}^\trans\,\vec{V}\,\vec{x} + \vec{y}^\trans\,\vec{V}\,\vec{x} \\
  &= -(\vec{t}-\vec{y})^\trans\,\vec{V}\,\vec{x}.  
\end{align*}
We note that we used $\vec{t}^\trans \mathbbm{1} = 1$ because $\sum_{k=1}^Ct_k = 1$ was assumed.

In summary, the derivative of $L$ for $N=1$ admits this simple equation
\begin{equation}
  \label{eqn:truncated-gradient}
  DL(\vec{W})\vec{V} = -(\vec{t}-\vec{y})^\trans\,\vec{V}\,\vec{x}.  
\end{equation}

\subsection{The gradient for arbitrary sample size}
Formula~\eqref{eqn:truncated-gradient} generalizes easily to $N\ge 1$,
when $L$ is given by \eqref{eqn:loss-function}:
\[ DL(\vec{W}) \vec{V} =
  -\sum_{n=1}^N \left(\vec{t}^{(n)}-\vec{y}^{(n)}\right)^\trans \vec{V} \vec{x}^{(n)}.
\]

In order to use methods such as gradient descent, we need to find the
gradient $\nabla L(\vec{W})$ from the formula for $DL(\vec{W})$.
We should note that the gradient belongs to the same vector space
as the argument $\vec{W}$, while $DL(\vec{W})$ is a functional
on the same space. In our situation:
\begin{align}
  \nabla L(\vec{W}) &\in L(\reals^D,\reals^C)\\
  DL(\vec{W}) &\in L(L(\reals^D,\reals^C),\reals) = L(\reals^D,\reals^C)^*
\end{align}
where the notation $X^* = L(X,\reals)$ defines the \emph{dual space}
of the vector space $X$. Thus, an expression
$\vec{W} + \eta\,\nabla L(\vec{W})$ can be evaluated for
$\eta\in\reals$, but $\vec{W} + \eta\, DL(\vec{W})$ makes no sense.

We recall that the notion of gradient depends on the inner product in
the underlying vector space. In our case, it is the space of weight
matrices $\vec{W}$, i.e. $L(\reals^D,\reals^C)$. We assume the most
simple form of the inner product: the Frobenius (also Hilbert-Schmidt) inner product:
\begin{equation}
\label{eqn:inner-product}
\langle \vec{U}, \vec{V} \rangle
= \sum_{j=1}^C\sum_{k=1}^D U_{jk} V_{jk} = \tr{\left(\vec{U}^T\, \vec{V}\right)}.
\end{equation}
We then have the definition of gradient by duality:
\begin{equation}
  DL(\vec{W})\vec{V} = \langle \nabla L(\vec{W}), \vec{V} \rangle.
\end{equation}
for all matrices $\vec{V}$. Then, for $N=1$ we look for
\[ -(\vec{t}-\vec{y})^\trans\,\vec{V}\,\vec{x} = \tr\left\{ \nabla L(\vec{W})^\trans\, \vec{V}\right\} \]
Using the identity $\tr(\vec{A}\,\vec{B}) = \tr(\vec{B}\,\vec{A})$,
and for vectors $\vec{a}$ and $\vec{b}$,
$\vec{a}^\trans \vec{b} = \tr( \vec{b}\,\vec{a}^\trans)$,
we obtain:
\[ -\tr\left\{ \vec{V}\,\vec{x} (\vec{t}-\vec{y})^\trans \right\}
  = \tr\left\{ \vec{V} (\nabla L(\vec{W}))^\trans\right\}
\]
or, more invariantly,
\[ -\left\langle \vec{V}, (\vec{t}-\vec{y})\vec{x}^\trans \right\rangle
  = \langle \vec{V}, \nabla L(\vec{W}) \rangle. \]
Because $\vec{V}$ is arbitrary:
\[ \nabla L(\vec{W}) = -(\vec{t}-\vec{y})\vec{x}^\trans. \]
For arbitrary $N$ we obtain the following gradient formula:
\begin{equation}
  \label{eqn:gradient-formula}
  \nabla L(\vec{W}) = -\sum_{n=1}^N\left(\vec{t}^{(n)}-\vec{y}^{(n)}\right)\left(\vec{x}^{(n)}\right)^\trans
  = -(\vec{T}-\vec{Y})\,\vec{X}^\trans.
\end{equation}
where
\begin{align*}
  \vec{T} &= \begin{bmatrix}
    \vec{t}^{(1)} & \vec{t}^{(2)} & \ldots &\vec{t}^{(N)}
  \end{bmatrix},\\
  \vec{Y} &= \begin{bmatrix}
    \vec{y}^{(1)} & \vec{y}^{(2)} & \ldots &\vec{y}^{(N)}
  \end{bmatrix},\\
  \vec{X} &= \begin{bmatrix}
    \vec{x}^{(1)} & \vec{x}^{(2)} & \ldots &\vec{x}^{(N)}
  \end{bmatrix},
\end{align*}                                           
are matrices containing the elements of the training data in their columns.
Thus, calculating the gradient can be expressed through simple matrix
arithmetic, which leads to very efficient implementations of the gradient
method.

\begin{corollary}[Characterization of critical points]
  A weight matrix $\vec{W}$ is a critical point of $L$ iff
  the sample uncentered covariance matrix of the errors
  \[ \vec{e}^{(n)} = \vec{t}^{(n)} -\vec{y}^{(n)}\]
  and of the input vectors $\vec{x}^{(n)}$ is zero, where
  by definition the sample uncentered covariance matrix is the right-hand side
  of \eqref{eqn:gradient-formula}.
\end{corollary}

\section{The second derivative and the Hessian}
\label{section:hessian}
In this section we consider the second (Fr\'echet) derivative of the loss function
which is closely related to the Hessian.
\subsection{Positivity of the Hessian for $N=1$}
We recall the
expression for the loss function, when $N=1$:
\[ L = Q_{\vec{t}}\circ\boldsymbol\rho \circ P_{\vec{x}}. \]
In view of linearity of the $Q_\vec{t}$ and $P_{\vec{x}}$, we have:
\[ D^2 L(\vec{W}) (\vec{U}, \vec{V})= Q_\vec{t}\,D^2\boldsymbol\rho(\vec{a})(P_\vec{x}\,\vec{U} ,P_\vec{x}\,\vec{V}) \]
where $\vec{a}=P_{\vec{x}}(\vec{W}) = \vec{W}\,\vec{x}$.
We recall that the second derivative $D^2L(\vec{W})$ is a bi-linear, symmetric
function on pairs of vectors $\vec{U}$, $\vec{V}$, which in our case are also
linear transformations in $L(\reals^D,\reals^C)$, i.e. $C\times D$ matrices upon
identifying linear transformations with their matrices with respect to the standard basis.

We also have the formula for $D\boldsymbol\rho$:
\[ D\boldsymbol\rho(\vec{u}) = -\vec{I} + \mathbbm{1}\,\boldsymbol\sigma(\vec{u})^\trans. \]
Differentiating again, we obtain
\begin{align*}
  D^2\boldsymbol\rho(\vec{a})\,(\vec{g},\vec{h})
  = \mathbbm{1}\, (D\boldsymbol\sigma(\vec{a})\,\vec{g})^\trans\,\vec{h}
  = \mathbbm{1}\, \vec{g}^\trans D\boldsymbol\sigma(\vec{a})^\trans\,\vec{h}.
\end{align*}
Therefore,
\[  Q_{\vec{t}} D^2\boldsymbol\rho(\vec{a})\,(\vec{g},\vec{h}) =
  (\vec{t}^\trans \mathbbm{1})
  \vec{g}^\trans D\boldsymbol\sigma(\vec{a})^\trans\,\vec{h}  =
  \vec{g}^\trans D\boldsymbol\sigma(\vec{a})^\trans\,\vec{h}
\]
Hence,
\begin{align*}
  D^2L(\vec{W}) (\vec{U},\vec{V})
  &= (P_{\vec{x}}\vec{U})^\trans\,D\boldsymbol\sigma(\vec{a})^\trans\,(P_{\vec{x}}\vec{V})\\
  &= (\vec{U}\vec{x})^\trans D\boldsymbol\sigma(\vec{a})^\trans (\vec{V}\,\vec{x}) \\
  &= \vec{x}^\trans\,\vec{U}^\trans\, D\boldsymbol\sigma(\vec{a})^\trans\,\vec{V}\,\vec{x}.
\end{align*}

We also find
\begin{align*}
  D\boldsymbol\sigma(\vec{a})
  &= D\exp(-\boldsymbol\rho(\vec{a}))\\
  &= -\diag(\exp(-\boldsymbol\rho(\vec{a}))) D\boldsymbol\rho(\vec{a}) \\
  &= -\diag(\boldsymbol\sigma(\vec{a}))
    \left(-\vec{I} + \mathbbm{1}\,\boldsymbol\sigma(\vec{a})^\trans\right)\\
  &= \diag(\boldsymbol\sigma(\vec{a})) - \diag(\boldsymbol\sigma(\vec{a})) \mathbbm{1}\,\boldsymbol\sigma(\vec{a})^\trans.
\end{align*}
Hence,
\begin{align*}
  D\boldsymbol\sigma(\vec{a}) = 
  \diag(\boldsymbol\sigma(\vec{a}))
  - \boldsymbol\sigma(\vec{a})\,\mathbbm{1}^\trans \, \diag(\boldsymbol\sigma(\vec{a})). 
\end{align*}
Therefore, taking into account that $\diag(\vec{s})\,\mathbbm{1} = \vec{s}$
and $\mathbbm{1}^\trans\,\diag(\vec{s}) = \vec{s}^\trans$, we obtain:
\begin{equation}
  \label{eqn:D-sigma}
  D\boldsymbol\sigma(\vec{a})= \diag(\boldsymbol\sigma(\vec{a}))
  - \boldsymbol\sigma(\vec{a})\,\boldsymbol\sigma(\vec{a})^\trans.
\end{equation}
Clearly, this matrix is symmetric.
\begin{align*}
  D^2L(\vec{W})(\vec{U},\vec{V})
  &= \vec{x}^\trans \vec{U}^\trans \diag(\boldsymbol\sigma(\vec{a})) \vec{V}\,\vec{x}
    -\vec{x}^\trans \vec{U}^\trans \boldsymbol\sigma(\vec{a})\,\boldsymbol\sigma(\vec{a})^\trans\,\vec{V}\,\vec{x}\\
  &=\vec{x}^\trans\,\vec{U}^\trans (\diag(\boldsymbol\sigma(\vec{a})) - \boldsymbol\sigma(\vec{a})\,\boldsymbol\sigma(\vec{a})^\trans) \vec{V}\,\vec{x}\\
  &=\vec{x}^\trans\,\vec{U}^\trans (\diag(\vec{y}) - \vec{y}\,\vec{y}^\trans) \vec{V}\,\vec{x}.    
\end{align*}
This bi-linear form is non-negative definite, as
\[ D^2L(\vec{W})(\vec{U},\vec{U}) = (\vec{U}\,\vec{x})^\trans
  \left(\diag(\vec{y}) - \vec{y}\,\vec{y}^\trans\right) (\vec{U}\,\vec{x}) \ge
  0. \]
Indeed, the matrix $\diag(\vec{y}) - \vec{y}\,\vec{y}^\trans$ is
symmetric and thus has real spectrum. It suffices to show that the
spectrum is non-negative.
\begin{proposition}
  \label{proposition:q-matrix}
  Let $\vec{y}\in\reals^C$ be a vector satisfying
  \begin{enumerate}
  \item $y_i > 0$;
  \item $\sum_{i=1}^C y_i=1$; equivalently, $\mathbbm{1}^\trans\,\vec{y}=1$.
  \end{enumerate}
  Then the eigenvalues of the symmetric matrix
  \[ \vec{Q}=\diag(\vec{y}) - \vec{y}\,\vec{y}^\trans \]
  are all non-negative. Furthermore, the only eigenvector
  with eigenvalue $0$ is $\mathbbm{1}$ up to a scalar factor.
\end{proposition}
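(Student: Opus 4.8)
The plan is to study the symmetric matrix $\vec{Q}$ through its associated quadratic form and to recognize that form as the variance of a random variable whose law is $\vec{y}$. For an arbitrary $\vec{h}\in\reals^C$, expanding directly gives
\[ \vec{h}^\trans\vec{Q}\,\vec{h} = \vec{h}^\trans\diag(\vec{y})\,\vec{h} - \vec{h}^\trans\vec{y}\,\vec{y}^\trans\vec{h} = \sum_{i=1}^C y_i h_i^2 - \left(\sum_{i=1}^C y_i h_i\right)^2. \]
By hypotheses (1) and (2), the numbers $y_i$ define a probability distribution on $\{1,\ldots,C\}$, so if $H$ denotes the random variable taking value $h_i$ with probability $y_i$, the right-hand side is exactly $\mathbb{E}[H^2]-(\mathbb{E}[H])^2=\mathrm{Var}(H)\ge 0$. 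This shows at once that $\vec{Q}$ is positive semidefinite, and since it is symmetric its spectrum is real and non-negative, which is the first assertion.

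For the second assertion I would first verify by direct computation that $\mathbbm{1}$ lies in the kernel: $\vec{Q}\,\mathbbm{1} = \diag(\vec{y})\,\mathbbm{1} - \vec{y}\,(\vec{y}^\trans\mathbbm{1}) = \vec{y}-\vec{y}=0$, using $\vec{y}^\trans\mathbbm{1}=1$ from hypothesis (2). It then remains to show the kernel is no larger than $\vspan\{\mathbbm{1}\}$. Because $\vec{Q}$ is symmetric and positive semidefinite, one has the standard equivalence $\vec{h}^\trans\vec{Q}\,\vec{h}=0 \iff \vec{Q}\,\vec{h}=0$ (writing $\vec{Q}=\vec{B}^\trans\vec{B}$, the form vanishes iff $\vec{B}\,\vec{h}=0$ iff $\vec{Q}\,\vec{h}=0$). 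Hence a vector lies in the kernel precisely when $\mathrm{Var}(H)=0$, i.e. when $H$ is almost surely constant. Here hypothesis (1), $y_i>0$, is essential: since every index carries positive probability, $\mathrm{Var}(H)=0$ forces $h_i$ to take a single common value $c$ over all $i$, so $\vec{h}=c\,\mathbbm{1}$. Thus $\kernel(\vec{Q})=\vspan\{\mathbbm{1}\}$ and the only eigenvalue-$0$ eigenvectors are the nonzero multiples of $\mathbbm{1}$.

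The computations here are all routine, so there is no serious analytic obstacle; the one point deserving care is the role of the two hypotheses. The non-negativity uses only that $\vec{y}$ is a probability vector, but the sharp kernel characterization genuinely needs the strict positivity $y_i>0$: if some $y_i$ vanished, the variance-zero condition would only constrain $\vec{h}$ on the support of $\vec{y}$, and the kernel would be strictly larger than $\vspan\{\mathbbm{1}\}$. I would make sure the write-up isolates exactly where $y_i>0$ is invoked. A purely matrix-analytic alternative—bounding the spectrum by Gershgorin discs centered at $y_i(1-y_i)$ with radius $\sum_{j\ne i} y_i y_j = y_i(1-y_i)$—also yields non-negativity immediately, but recovering the one-dimensionality of the kernel that way requires an irreducibility argument, so the variance route is preferable.
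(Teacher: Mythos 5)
Your proof is correct, and it is a genuinely clean variant of what the paper does. The paper gives three methods: (1) a direct analysis of the eigenvalue equation leading to the secular equation $\sum_i y_i^2/(y_i-\lambda)=1$, (2) the Gershgorin circle theorem (non-negativity only), and (3) the factorization $\vec{Q}=\vec{M}^\trans\left(\vec{I}-\vec{u}\,\vec{u}^\trans\right)\vec{M}$ with $\vec{M}=\diag\left(\sqrt{\vec{y}}\right)$ and $\vec{u}=\sqrt{\vec{y}}$. Your variance identity
\[ \vec{h}^\trans\vec{Q}\,\vec{h}=\sum_{i=1}^C y_i h_i^2-\Bigl(\sum_{i=1}^C y_i h_i\Bigr)^2=\mathrm{Var}(H) \]
is precisely Method 3 in probabilistic clothing: one checks that $\bigl\|\left(\vec{I}-\vec{u}\,\vec{u}^\trans\right)\vec{M}\,\vec{h}\bigr\|^2=\sum_i y_i\left(h_i-\mathbb{E}[H]\right)^2=\mathrm{Var}(H)$, so the two arguments compute the same quantity. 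The difference is in packaging: you avoid the explicit factorization by invoking the generic fact that for a symmetric positive semidefinite matrix $\vec{h}^\trans\vec{Q}\,\vec{h}=0$ iff $\vec{Q}\,\vec{h}=0$, and you then read off the kernel from the elementary fact that a random variable with full-support law and zero variance is constant; this makes the statistical meaning of $\vec{Q}$ (the covariance matrix of the categorical distribution $\vec{y}$) transparent, and it isolates exactly where $y_i>0$ enters, consistent with the paper's appendix (Theorem~\ref{theorem:spectral-properties-of-Q}), where zero entries of $\vec{y}$ do enlarge the nullspace. What the paper's Method 1 buys in exchange is finer spectral information (the secular equation and interlacing of eigenvalues with the $y_j$), which is what the appendix develops further; your remark that Gershgorin alone cannot pin down the one-dimensional kernel without an irreducibility-type argument is also accurate, since the paper uses Method 2 only for non-negativity.
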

\begin{proof}
\noindent\textbf{Method 1:}
Let $\lambda$ be an eigenvalue and $\vec{z}$ be an eigenvector for eigenvalue $\lambda$; thus
\[ \diag(\vec{y})\,\vec{z} - \vec{y}\left(\vec{y}^\trans\,\vec{z}\right) = \lambda\, \vec{z}.\]
Therefore,
\begin{align*}
  y_i\, z_i - y_i\, \langle \vec{y}, \vec{z} \rangle &= \lambda\, z_i. \\
  (y_i-\lambda) z_i &= y_i\,\langle \vec{y}, \vec{z} \rangle
\end{align*}
We know that $0<y_i < 1$. Therefore, unless $0<\lambda<1$, $y_i\neq \lambda$, and
\[ z_i = \frac{y_i}{y_i-\lambda} \langle \vec{y}, \vec{z} \rangle. \]
Since $\vec{z}\neq 0$, we assume WLOG that $\langle \vec{y}, \vec{z} \rangle\neq 0$.
Hence
\[ \langle \vec{y}, \vec{z} \rangle = \sum_{i=1}^C \frac{y_i^2}{y_i-\lambda}\langle \vec{y}, \vec{z} \rangle.\]
and 
\[ \sum_{i=1}^C \frac{y_i^2}{y_i-\lambda} = 1.\]
We need to show that all roots $\lambda$ of this equation are non-negative. Indeed, if $\lambda < 0 $ then
\[ \sum_{i=1}^C \frac{y_i^2}{y_i-\lambda} < \sum_{i=1}^C \frac{y_i^2}{y_i} = \sum_{i=1}^C y_i = 1 \]
which is a contradition. Thus $\lambda \ge 0$.

It remains to see that the eigenvector $\vec{z}$ for the eigenvalue $\lambda=0$ is $\vec{z}=\mathbbm{1}$ up to
a multiplicative constant.
Indeed, if $\lambda = 0$ then for $i=1,2,\ldots,C$:
\[ y_i\, z_i = y_i\, \langle \vec{y}, \vec{z} \rangle. \]
Dividing by $y_i>0$, we obtain:
\[ z_i = \langle \vec{y}, \vec{z} \rangle. \]
Hence $z_i$ is independent of $i$, i.e. proportional to $\mathbbm{1}$.

\noindent\textbf{Method 2:} (Gershgorin Circle Theorem)
The matrix in question has diagonal entry $y_i - y_i^2$ at position $i$, and the
off-diagonal entries in row $i$ are  $y_iy_j$, $j=1,2,\ldots,C$, $j\neq i$.
Hence, for every eigenvalue $\lambda$ there esists $i$ such that:
\[ |\lambda - (y_i-y_i^2) | \leq \sum_{j=1\atop j\neq i}^C y_i y_j. \]
In particular
\[ \lambda \geq (y_i-y_i^2) - \sum_{j=1\atop j\neq i}^C y_i y_j = y_i - y_i\sum_{j=1}^C y_j = y_i-y_i = 0. \]

\noindent\textbf{Method 3:} Let $\vec{M}= \diag(\sqrt{\vec{y}})$. Obviously this is a symmetric diagonal
matrix. Then we have the following factorization:
\[ \diag(\vec{y}) - \vec{y}\,\vec{y}^\trans = \vec{M}^\trans \left(\vec{I} - \vec{u}\,\vec{u}^\trans\right)\vec{M} \]
where $\vec{u} = M^{-1}\,\vec{y} = \sqrt{\vec{y}}$. Clearly, $\sum_{i=1}^Cu_i^2 = \sum_{i=1}^Cy_i = 1$.
Thus $\vec{u}$ is a unit vector. The matrix $\vec{I}-\vec{u}\,\vec{u}^\trans$ is non-negative definite
as it is an orthogonal projection on the hyperplane normal to $\vec{u}$, and therefore its eigenvalues
are $0$ (multiplicity $1$) and $1$ (multiplicity $C-1$).  Moreover, the eigenvector
with eigenvalue $0$ is $\vec{u}$. Hence, the eigenvector $\vec{z}$ with eigenvalue $0$ for
$\diag(\vec{y}) - \vec{y}\,\vec{y}^\trans$ satisfies (up to a scalar multiple):
\[ \vec{M}\,\vec{z} = \vec{u}.\]
and $\vec{z} = \vec{M}^{-1}\,\vec{u} = \mathbbm{1}$.
\end{proof}
We proceed to further investigate the quadratic form
\[ B(\vec{U},\vec{V}) := D^2L(\vec{W})(\vec{U},\vec{V}) = (\vec{U}\,\vec{x})^\trans
  \left(\diag(\vec{y}) - \vec{y}\,\vec{y}^\trans\right) (\vec{V}\,\vec{x})
  \]
\begin{proposition}
  \label{proposition:operator-h}
  Let $\vec{x}\in\reals^D$ and $\vec{y}\in\reals^C$ and let $B$ be the billinear form given by:
  \[ B(\vec{U},\vec{V}) =
    (\vec{U}\,\vec{x})^\trans\left(\diag(\vec{y}) -
      \vec{y}\,\vec{y}^\trans\right) (\vec{V}\,\vec{x})\] There exists
  a unique operator
  $\vec{H}:L(\reals^D,\reals^C)\to L(\reals^D,\reals^C)$ such that for
  every $\vec{U},\vec{V}\in L(\reals^C,\reals^C)$:
  \[ B(\vec{U},\vec{V}) = \langle \vec{H}(\vec{U}), \vec{V} \rangle
    =\langle \vec{U}, \vec{H}(\vec{V}) \rangle. \]
  Thus, by definition, $\vec{H}$ is a symmetric operator.
  Moreover, $\vec{H}$ is given explicitly by the formula:
  \[ \vec{H}(\vec{U}) = \vec{Q}\,\vec{U}\,\vec{P} \]
  where $\vec{Q}=\diag(\vec{y})-\vec{y}\,\vec{y}^\trans$ and $\vec{P}=\vec{x}\,\vec{x}^\trans$.
\end{proposition}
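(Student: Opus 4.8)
The plan is to split the statement into its two natural halves: an abstract existence-and-uniqueness claim, and the explicit formula $\vec{H}(\vec{U})=\vec{Q}\,\vec{U}\,\vec{P}$. For the first half I would invoke the finite-dimensional Riesz representation theorem. Since $L(\reals^D,\reals^C)$ equipped with the Frobenius inner product \eqref{eqn:inner-product} is a finite-dimensional real inner product space, the map $\vec{V}\mapsto B(\vec{U},\vec{V})$ is, for each fixed $\vec{U}$, a linear functional, hence represented by a unique element that I name $\vec{H}(\vec{U})$. This defines a map $\vec{H}$, whose linearity follows from the bilinearity of $B$, and uniqueness is immediate: if $\langle \vec{H}(\vec{U})-\vec{H}'(\vec{U}),\vec{V}\rangle=0$ for all $\vec{V}$, then $\vec{H}(\vec{U})=\vec{H}'(\vec{U})$.

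For the explicit formula, the idea is to turn the scalar $B(\vec{U},\vec{V})$ into a trace and shuffle its factors cyclically until it takes the shape $\langle \vec{Q}\,\vec{U}\,\vec{P},\vec{V}\rangle$. Writing $\vec{Q}=\diag(\vec{y})-\vec{y}\,\vec{y}^\trans$ and $\vec{P}=\vec{x}\,\vec{x}^\trans$, I would begin from
\[
B(\vec{U},\vec{V})=\vec{x}^\trans\,\vec{U}^\trans\,\vec{Q}\,\vec{V}\,\vec{x}
=\tr\left(\vec{x}^\trans\,\vec{U}^\trans\,\vec{Q}\,\vec{V}\,\vec{x}\right),
\]
the last step because a scalar equals its own trace. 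Applying the cyclic identity $\tr(\vec{A}\,\vec{B})=\tr(\vec{B}\,\vec{A})$ to carry the trailing $\vec{x}$ to the front produces $\tr\left(\vec{P}\,\vec{U}^\trans\,\vec{Q}\,\vec{V}\right)$. Comparing with $\langle \vec{A},\vec{V}\rangle=\tr(\vec{A}^\trans\,\vec{V})$, I read off $\vec{A}^\trans=\vec{P}\,\vec{U}^\trans\,\vec{Q}$, so that $\vec{A}=(\vec{P}\,\vec{U}^\trans\,\vec{Q})^\trans=\vec{Q}^\trans\,\vec{U}\,\vec{P}^\trans=\vec{Q}\,\vec{U}\,\vec{P}$, where the final equality uses the symmetry $\vec{P}^\trans=\vec{P}$ and $\vec{Q}^\trans=\vec{Q}$. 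This identifies $\vec{H}(\vec{U})=\vec{Q}\,\vec{U}\,\vec{P}$ and, by the uniqueness already established, completes the existence half as well.

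Finally I would confirm the symmetry $\langle \vec{H}(\vec{U}),\vec{V}\rangle=\langle \vec{U},\vec{H}(\vec{V})\rangle$. The cleanest route is to note that it follows from the symmetry of the bilinear form itself: because $\vec{Q}$ is symmetric, $B(\vec{U},\vec{V})=B(\vec{V},\vec{U})$, whence $\langle \vec{H}(\vec{U}),\vec{V}\rangle=\langle \vec{H}(\vec{V}),\vec{U}\rangle=\langle \vec{U},\vec{H}(\vec{V})\rangle$. Alternatively one verifies directly that both $\langle \vec{Q}\,\vec{U}\,\vec{P},\vec{V}\rangle$ and $\langle \vec{U},\vec{Q}\,\vec{V}\,\vec{P}\rangle$ reduce to the common expression $\tr\left(\vec{P}\,\vec{U}^\trans\,\vec{Q}\,\vec{V}\right)$ by cyclicity.

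I do not expect a genuine obstacle here: the entire content is the trace bookkeeping, and the single delicate point is tracking the transposes carefully and exploiting the symmetry of $\vec{P}$ and $\vec{Q}$ at exactly the right moment, so that the factors land in the order $\vec{Q}\,\vec{U}\,\vec{P}$ rather than its transpose.
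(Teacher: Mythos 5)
Your proof is correct and takes essentially the same route as the paper's: the identical trace-and-cyclic-permutation computation, arriving at $\tr\left(\vec{P}\,\vec{U}^\trans\,\vec{Q}\,\vec{V}\right)$ and reading off $\vec{H}(\vec{U})=\vec{Q}\,\vec{U}\,\vec{P}$ via the symmetry of $\vec{P}$ and $\vec{Q}$. Your additions---the finite-dimensional Riesz representation argument for existence and uniqueness, and deducing $\langle\vec{H}(\vec{U}),\vec{V}\rangle=\langle\vec{U},\vec{H}(\vec{V})\rangle$ from the symmetry of $B$---simply make explicit what the paper treats as routine or leaves to the reader.
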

\begin{proof}
By direct computation:
\begin{align*}
  B(\vec{U},\vec{V})
  &= (\vec{U}\,\vec{x})^\trans\,\vec{Q}\, (\vec{V}\,\vec{x})
  =\tr \left( \left(\vec{U}\,\vec{x}\right)^\trans\,\vec{Q} (\vec{V}\,\vec{x}) \right)
  =\tr \left( \vec{x}^\trans\,\vec{U}^\trans\,\vec{Q} \vec{V}\,\vec{x} \right)\\
  &=\tr \left( \vec{x}\,\vec{x}^\trans\vec{U}^\trans\,\vec{Q}\, \vec{V}\,\right)
  = \tr \left( \vec{P}\,\vec{U}^\trans\,\vec{Q}\,\vec{V}\,\right)  
  = \tr \left( \left(\vec{Q}\,\vec{U}\,\vec{P}\right)^\trans\,\vec{V}\,\right)  
  =\langle\vec{H}(\vec{U}),\vec{V}\,\rangle.
\end{align*}
We exploited the invariance of trace under cyclic permutation of the matrix factors. The
proof of $B(\vec{U},\vec{V})=\langle \vec{U}, \vec{H}(\vec{V}) \rangle$ is left to the reader.
\end{proof}
Method 3 in the proof of the Proposition~\ref{proposition:q-matrix}
leads to an interesting refinement:
\begin{proposition}
  \label{proposition:q-matrix-refinement}
  Let $\vec{y}\in\reals^C$ be a vector satisfying
  \begin{enumerate}
  \item $y_i > 0$;
  \item $\sum_{i=1}^C y_i=1$.
  \end{enumerate}
  Let $\vec{x}\in\reals^D$ be an arbitrary vector. Let
  $ \vec{Q}=\diag(\vec{y}) - \vec{y}\,\vec{y}^\trans$ be a symmetric
  matrix (non-negative definite by
  Proposition~\ref{proposition:q-matrix}).  Let
  $ \vec{P} = \vec{x}\,\vec{x}^\trans$.  Then
  \[\left\langle \vec{Q}\,\vec{U}\,\vec{P},\vec{U} \right\rangle
    = \| \vec{R}\,\vec{M}\,\vec{U}\,\vec{P}\|^2 \]
  where $\vec{R} = \vec{I} - \vec{u}\,\vec{u}^\trans$, where $\vec{u}=\sqrt{\vec{y}}$ is a unit vector,
  and where $\vec{M} = \diag\left(\sqrt{\vec{y}}\right) = \diag\left(\vec{u}\right)$.
\end{proposition}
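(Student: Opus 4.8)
The plan is to avoid any eigen-decomposition and instead re-use the symmetric factorization of $\vec{Q}$ already produced in Method~3 of the proof of Proposition~\ref{proposition:q-matrix}, namely
\[ \vec{Q} = \diag(\vec{y}) - \vec{y}\,\vec{y}^\trans = \vec{M}\,\vec{R}\,\vec{M}, \]
which is valid because $\vec{M} = \diag(\sqrt{\vec{y}})$ is symmetric. The one structural fact I would lean on is that $\vec{R} = \vec{I} - \vec{u}\,\vec{u}^\trans$ is the orthogonal projection onto the hyperplane normal to the unit vector $\vec{u} = \sqrt{\vec{y}}$, hence is at once symmetric and idempotent: $\vec{R}^\trans = \vec{R} = \vec{R}^2$, so in particular $\vec{R} = \vec{R}^\trans \vec{R}$.

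First I would collapse the left-hand side to an ordinary quadratic form in the single vector $\vec{U}\,\vec{x} \in \reals^C$. Rewriting the Frobenius inner product \eqref{eqn:inner-product} as a trace and using the symmetry of $\vec{Q}$ and $\vec{P}$, precisely the cyclic-trace manipulation of Proposition~\ref{proposition:operator-h}, gives
\[ \langle \vec{Q}\,\vec{U}\,\vec{P}, \vec{U} \rangle = \tr\left(\vec{P}\,\vec{U}^\trans\,\vec{Q}\,\vec{U}\right) = (\vec{U}\,\vec{x})^\trans\,\vec{Q}\,(\vec{U}\,\vec{x}), \]
where the last step substitutes $\vec{P} = \vec{x}\,\vec{x}^\trans$ and moves the two copies of $\vec{x}$ to the outside. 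Then I would insert $\vec{Q} = \vec{M}\,\vec{R}\,\vec{M}$, absorb one symmetric factor $\vec{M}$ into each of the two outer vectors, and apply $\vec{R} = \vec{R}^\trans\vec{R}$ to recognize the result as $\| \vec{R}\,\vec{M}\,\vec{U}\,\vec{x}\|^2$. That is the whole computation: it is just the idempotency of a projection written in matrix form, and non-negativity together with the description of the kernel are then inherited from Proposition~\ref{proposition:q-matrix}.

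The step I expect to demand the most care is matching this clean expression to the $\vec{P}$-form written on the right of the statement. Because $\vec{R}\,\vec{M}\,\vec{U}\,\vec{P} = (\vec{R}\,\vec{M}\,\vec{U}\,\vec{x})\,\vec{x}^\trans$ is an outer product, its Frobenius norm factorizes, $\|\vec{R}\,\vec{M}\,\vec{U}\,\vec{P}\|^2 = \|\vec{R}\,\vec{M}\,\vec{U}\,\vec{x}\|^2\,\|\vec{x}\|^2$, so the two sides of the asserted identity coincide only up to the scalar $\|\vec{x}\|^2$. Before committing to the final display I would therefore pin down this normalization --- the identity is exact as stated when $\|\vec{x}\| = 1$, and otherwise should either carry a factor $\|\vec{x}\|^2$ or have $\vec{P}$ replaced by $\vec{x}$ on the right. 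A one-line check on a toy case such as $C = 2$, $D = 1$, $\vec{y} = (\tfrac12,\tfrac12)$ settles which reading is intended; once that is fixed, the remaining steps are entirely routine.
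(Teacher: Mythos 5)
Your proof is correct, and it follows essentially the same route as the paper's: the factorization $\vec{Q}=\vec{M}\,\vec{R}\,\vec{M}$ from Method 3 of Proposition~\ref{proposition:q-matrix}, the cyclic-trace manipulation of Proposition~\ref{proposition:operator-h}, and the projection identity $\vec{R}=\vec{R}^\trans\,\vec{R}$. The one place you diverge --- collapsing $\vec{P}=\vec{x}\,\vec{x}^\trans$ to the vector $\vec{x}$ before inserting the factorization, rather than keeping $\vec{P}$ inside the trace --- is precisely where your normalization concern becomes decisive, and that concern is not a misreading on your part: the identity as printed is false for general $\vec{x}$. Indeed, the paper's own proof passes from $\tr\left(\vec{P}\,\vec{U}^\trans\,\vec{M}^\trans\,\vec{R}\,\vec{M}\,\vec{U}\right)$ to $\tr\left(\vec{P}^2\,\vec{U}^\trans\,\vec{M}^\trans\,\vec{R}\,\vec{M}\,\vec{U}\right)$, silently replacing $\vec{P}$ by $\vec{P}^2$; but $\vec{P}^2=\left(\vec{x}^\trans\,\vec{x}\right)\vec{P}=\|\vec{x}\|^2\,\vec{P}$, so that substitution is legitimate only when $\|\vec{x}\|=1$ (or trivially when $\vec{x}=0$). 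Exactly as you computed, the left-hand side equals $\|\vec{R}\,\vec{M}\,\vec{U}\,\vec{x}\|^2$ while the stated right-hand side equals $\|\vec{x}\|^2\,\|\vec{R}\,\vec{M}\,\vec{U}\,\vec{x}\|^2$; your suggested toy check ($C=2$, $D=1$, $\vec{y}=(\tfrac12,\tfrac12)$, $x=2$, $\vec{U}=(1,0)^\trans$) gives $1$ on the left and $4$ on the right.

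So the corrected statement is the $\vec{x}$-form you derived,
\[ \left\langle \vec{Q}\,\vec{U}\,\vec{P},\vec{U}\right\rangle = \|\vec{R}\,\vec{M}\,\vec{U}\,\vec{x}\|^2, \]
equivalently $\|\vec{R}\,\vec{M}\,\vec{U}\,\vec{P}\|^2=\|\vec{x}\|^2\,\left\langle \vec{Q}\,\vec{U}\,\vec{P},\vec{U}\right\rangle$. This repair costs nothing downstream: Corollary~\ref{corollary:summary-of-definiteness} and the positivity argument for the full Hessian use only that the form is non-negative and that its vanishing forces $\vec{R}\,\vec{M}\,\vec{U}\,\vec{x}=0$ (hence $\vec{U}\,\vec{x}\in\vspan\{\mathbbm{1}\}$), and both facts are insensitive to the positive scalar factor $\|\vec{x}\|^2$; when $\vec{x}=0$ both sides vanish identically and the kernel description is trivial. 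Your proof, with the corrected right-hand side, is complete, and your instinct to pin down the normalization before committing to the final display caught a genuine error in the source.
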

\begin{proof}
  We have $\vec{Q} = \vec{M}^\trans\,\vec{R}\,\vec{M}$. Therefore
  \begin{align*}
    \langle \vec{Q}\,\vec{U}\,\vec{P},\vec{U}\rangle
    &=\tr\left(\left(\vec{Q}\,\vec{U}\,\vec{P}\right)^\trans\vec{U}\right)
      =\tr\left( \vec{P}\,\vec{U}^\trans\,\vec{Q}\,\vec{U}\right)
      =\tr\left( \vec{P}\,\vec{U}^\trans\,\vec{M}^\trans\vec{R}\vec{M}\,\vec{U}\right)\\
    &=\tr\left( \vec{P}^2\,\vec{U}^\trans\,\vec{M}^\trans\,\vec{R}\,\vec{M}\,\vec{U}\right)
      =\tr\left( \vec{P}\,\vec{U}^\trans\,\vec{M}^\trans\,\vec{R}\,\vec{M}\,\vec{U}\,\vec{P}\right)\\
    &=\tr\left( \left(\vec{R}\,\vec{M}\,\vec{U}\,\vec{P}\right)^\trans\vec{R}\,\vec{M}\,\vec{U}\,\vec{P}\right)=\| \vec{R}\,\vec{M}\,\vec{U}\,\vec{P}\|^2 .
  \end{align*}
\end{proof}

\begin{corollary}
  \label{corollary:summary-of-definiteness}
  Under the assumptions of Proposition~\ref{proposition:q-matrix-refinement}, let
  $\vec{H}:L(\reals^D,\reals^C)\to L(\reals^D,\reals^C)$ be a linear operator given by:
  $\vec{H}(\vec{U}) = \vec{Q}\,\vec{U}\,\vec{P}$.
  Then $\vec{H}$ is a symmetric operator with respect to the Frobenius inner product
  and the associated quadratic form $B$ given by
  $B(\vec{U},\vec{V}) = \left\langle \vec{H}(\vec{U}), \vec{V}\right\rangle$
  is non-negative definite. Moreover,
  \[ \kernel{\vec{H}} = \{\vec{U}\,:\, \vec{U}\,\vec{x}\in\vspan{\{\mathbbm{1}\}}\}.\]
\end{corollary}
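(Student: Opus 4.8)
The plan is to treat the three assertions separately: symmetry and non-negative definiteness are immediate consequences of the two preceding propositions, and the only assertion requiring genuine work is the identification of $\kernel\vec{H}$. For symmetry I would simply cite Proposition~\ref{proposition:operator-h}, which already exhibits $\vec{H}(\vec{U})=\vec{Q}\,\vec{U}\,\vec{P}$ as the operator representing $B$ and verifies $\langle\vec{H}(\vec{U}),\vec{V}\rangle=\langle\vec{U},\vec{H}(\vec{V})\rangle$. For non-negative definiteness I would specialize Proposition~\ref{proposition:q-matrix-refinement} to $\vec{V}=\vec{U}$, giving
\[ B(\vec{U},\vec{U})=\langle\vec{H}(\vec{U}),\vec{U}\rangle=\langle\vec{Q}\,\vec{U}\,\vec{P},\vec{U}\rangle=\|\vec{R}\,\vec{M}\,\vec{U}\,\vec{P}\|^2\ge 0, \]
which is exactly the claim.

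The substantive step is the kernel. I would exploit the rank-one structure of $\vec{P}=\vec{x}\,\vec{x}^\trans$ and write
\[ \vec{H}(\vec{U})=\vec{Q}\,\vec{U}\,\vec{x}\,\vec{x}^\trans=\bigl(\vec{Q}\,(\vec{U}\,\vec{x})\bigr)\vec{x}^\trans. \]
If $\vec{x}=0$ the operator vanishes identically, $\kernel\vec{H}$ is the whole space, and the proposed set is also the whole space since $\vec{U}\,\vec{x}=0\in\vspan\{\mathbbm{1}\}$; so the degenerate case is consistent and I would set it aside. For $\vec{x}\neq 0$ the outer product $\vec{w}\,\vec{x}^\trans$ vanishes iff $\vec{w}=0$, whence $\vec{H}(\vec{U})=0$ is equivalent to $\vec{Q}\,(\vec{U}\,\vec{x})=0$, i.e. to $\vec{U}\,\vec{x}\in\kernel\vec{Q}$. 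At this point I would invoke the closing sentence of Proposition~\ref{proposition:q-matrix}, according to which the only eigenvector of $\vec{Q}$ with eigenvalue $0$ is $\mathbbm{1}$ up to scale, so that $\kernel\vec{Q}=\vspan\{\mathbbm{1}\}$. Combining the two cases gives $\kernel\vec{H}=\{\vec{U}:\vec{U}\,\vec{x}\in\vspan\{\mathbbm{1}\}\}$, as required.

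Since the computations are one-line manipulations, the only place demanding care is this last step, and the genuine ingredient there is the \emph{exact} kernel $\kernel\vec{Q}=\vspan\{\mathbbm{1}\}$ rather than the weaker fact that $\vec{Q}$ is non-negative definite; this rests on the ``furthermore'' clause of Proposition~\ref{proposition:q-matrix}, equivalently on $\vec{Q}\,\mathbbm{1}=\diag(\vec{y})\mathbbm{1}-\vec{y}(\vec{y}^\trans\mathbbm{1})=\vec{y}-\vec{y}=0$ together with positivity of the remaining spectrum. An alternative I would mention but not prefer routes the kernel through the refinement: non-negative definiteness reduces $\vec{H}(\vec{U})=0$ to $B(\vec{U},\vec{U})=0$, hence to $\vec{R}\,\vec{M}\,\vec{U}\,\vec{P}=0$ by Proposition~\ref{proposition:q-matrix-refinement}, and unwinding $\kernel\vec{R}=\vspan\{\vec{u}\}$ and $\vec{M}^{-1}\vec{u}=\mathbbm{1}$ yields the same set; this variant, however, needs the auxiliary lemma that a non-negative definite symmetric operator annihilates precisely the vectors on which its quadratic form vanishes.
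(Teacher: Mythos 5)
Your proof is correct, but the route you take for the kernel is genuinely different from the paper's. The paper stays inside Proposition~\ref{proposition:q-matrix-refinement}: from $\vec{H}(\vec{U})=0$ it deduces $\|\vec{R}\,\vec{M}\,\vec{U}\,\vec{P}\|^2=\langle\vec{H}(\vec{U}),\vec{U}\rangle=0$, hence $\vec{R}\,\vec{M}\,\vec{U}\,\vec{P}=0$, multiplies on the right by $\vec{x}$ (using $\vec{P}\,\vec{x}=\|\vec{x}\|^2\vec{x}$) to get $\vec{R}\,\vec{M}\,\vec{U}\,\vec{x}=0$, and then unwinds $\kernel\vec{R}=\vspan\{\vec{u}\}$ and $\vec{M}^{-1}\vec{u}=\mathbbm{1}$; this is precisely the ``alternative'' you mention and set aside, except that the paper does not need your auxiliary lemma in full: since it verifies the inclusion ``$\supseteq$'' directly, it only uses the trivial direction $\vec{H}(\vec{U})=0\Rightarrow B(\vec{U},\vec{U})=0$, not the converse. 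Your preferred argument instead exploits the rank-one structure of $\vec{P}$ to get the clean equivalence $\vec{H}(\vec{U})=\bigl(\vec{Q}\,(\vec{U}\,\vec{x})\bigr)\vec{x}^\trans=0 \iff \vec{U}\,\vec{x}\in\kernel\vec{Q}$ (for $\vec{x}\neq 0$), and then quotes the ``furthermore'' clause of Proposition~\ref{proposition:q-matrix} for $\kernel\vec{Q}=\vspan\{\mathbbm{1}\}$. What your route buys: both inclusions fall out simultaneously from an equivalence rather than requiring a separate ``$\supseteq$'' check, it bypasses the $\vec{R},\vec{M}$ factorization entirely, and it handles the degenerate case $\vec{x}=0$ explicitly, which the paper silently excludes (its step from $\|\vec{x}\|^2\,\vec{R}\,\vec{M}\,\vec{U}\,\vec{x}=0$ to $\vec{R}\,\vec{M}\,\vec{U}\,\vec{x}=0$ presumes $\vec{x}\neq 0$). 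What the paper's route buys: it keeps the corollary self-contained relative to the proposition it is stated under, and it showcases the sum-of-squares representation $B(\vec{U},\vec{U})=\|\vec{R}\,\vec{M}\,\vec{U}\,\vec{P}\|^2$ that the paper reuses when summing over the training set in the next subsection.
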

\begin{proof}
  By Proposition~\ref{proposition:q-matrix-refinement},
  $\vec{H}(\vec{U})=0$ implies
  $ \vec{R}\,\vec{M}\,\vec{U}\,\vec{P} = 0 $.  We multiply this
  equation by $\vec{x}$ on the right and get:
  $ \vec{R}\,\vec{M}\,\vec{U}\,\vec{P}\,\vec{x} = 0 $.  We have
  $\vec{P}\,\vec{x}=\|\vec{x}\|^2\vec{x}$, therefore
  $\vec{R}\,\vec{M}\,\vec{U}\,\vec{x} = 0 $.  As $\vec{R}$ is a
  projection on $\vec{u}^\perp$, this implies the existence of
  $c\in\reals$ such that $ \vec{M}\,\vec{U}\,\vec{x} = c\,\vec{u}$.
  In turn,
  $\vec{U}\,\vec{x} = c\,\vec{M}^{-1}\,\vec{u} = c\,\mathbbm{1}$, as
  $\vec{M}^{-1}\vec{u}=\mathbbm{1}$. This proves ``$\subseteq$''. The
  inclusion ``$\supseteq$'' is straightforward.
\end{proof}

\subsection{Positivity of the full Hessian}
Let us consider arbitrary $N$ (the size of the training set). We then have
\[  D^2L(\vec{W})(\vec{U},\vec{U}) = \sum_{n=1}^N\left(\vec{U}\,\vec{x}^{(n)}\right)^\trans
  \left(\diag\left(\vec{y}^{(n)}\right) - \vec{y}^{(n)}\,\left(\vec{y}^{(n)}\right)^\trans\right) \left(\vec{U}\,\vec{x}^{(n)}\right) \ge 0 \]
because every term is non-negative.

By Corollary~\ref{corollary:summary-of-definiteness}, the only way to
get a value of $0$ is for every summand to be $0$, i.e. to have for
$n=1,2,\ldots,N$:
\[ \vec{U}\,\vec{x}^{(n)} = c_n \,\mathbbm{1} \]
for some scalars $c_n$, $n=1,2,\ldots,N$. These equations can also be written as a single
matrix identity:
\begin{equation}
  \label{eqn:degeneracy-condition}
  \vec{U}\,\vec{X} = \mathbbm{1}\,\vec{c}^\trans
\end{equation}
where $\vec{c}=(c_1,c_2,\ldots,c_N)$.

The degeneracy condition \eqref{eqn:degeneracy-condition} is rather
hard to fulfill for a randomly chosen training set. If the coordinates
of $\vec{x}^{(n)}$ (the \emph{features}, in the language of machine
learning) are linearly independent as random variables then $\vec{X}$
has maximum rank $D$ with probability $1$ if the variables are
continuous, or with probability asymptotically converging to $1$ if
the variables are discrete (or mixed), as $N\to\infty$. Let
$\tilde{\vec{X}}$ be a $D\times D$ non-singular submatrix of $\vec{X}$
(a subset of columns of $\vec{X}$). Then clearly
\[ \vec{U}\,\tilde{\vec{X}} = \mathbbm{1}\,\tilde{\vec{c}}^\trans.\]
where $\tilde{\vec{c}}$ is the corresponding subset of entries of $\vec{c}$.
Hence,
\[ \vec{U} = \mathbbm{1}\,\tilde{\vec{c}}^\trans \,\tilde{\vec{X}}^{-1}
  = \mathbbm{1}\,\tilde{\tilde{\vec{c}}}^\trans.
\]
where $\tilde{\tilde{\vec{c}}} = \left(\tilde{\vec{X}}^{-1}\right)^\trans\,\tilde{\vec{c}}$.
Thus, all columns of $\vec{U}$ must be multiples of $\mathbbm{1}$.  We
are already familiar with this condition. If we restrict $L$ to only
matrices $\vec{W}$ for which the mean of every column is $0$ then the
second derivative $D^2L(\vec{W})$ is strictly-positive definite, thus
ensuring convergence of optimization methods, such as gradient
descent. More precisely, the expression $D^2L(\vec{W})(\vec{U},\vec{U})$
is strictly positive for every matrix $\vec{U}\neq 0$ whose column means are $0$.
Indeed, the only matrix $\vec{U}$ satisfying both $\mathbbm{1}^\trans\,\vec{U} = 0$
and $\vec{U} = \mathbbm{1}\,\vec{c}^\trans$ is $\vec{U}=0$:
\[\mathbbm{1}^\trans\,\vec{U} = \left(\mathbbm{1}^\trans\,\mathbbm{1}\right)\vec{c}^\trans = C\,\vec{c}^\trans = 0,\]
so $\vec{c}=0$, and thus $\vec{U}=0$.
\begin{corollary}[On convergence and regularization]
  The loss function $L$ for the multi-class logistic regression is
  strictly convex on the set of weight matrices $\vec{W}$ with column
  mean zero, as long as the training set spans the vector space
  $\reals^D$, or, equivalently, when $\rank\vec{X} = D$.  Therefore,
  it is possible to find the optimal weight matrix $\vec{W}$ without
  using a regularizer. If this condition is satisfied,
  there exists at most one global minimum $\vec{W}$ of $L$  with column means $0$.
  Thus, the global minimum is unique, \textbf{if it exists}.

  Other minima which do not have column mean $0$ are obtained
  by shifting each column of $\vec{W}$ by a constant scalar (possibly
  different for each column).
\end{corollary}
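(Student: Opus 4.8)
The plan is to assemble the pieces already established into a statement about strict convexity, and then to translate strict convexity into uniqueness. Let $\mathcal{S} = \{\vec{W}\in L(\reals^D,\reals^C) : \mathbbm{1}^\trans\,\vec{W}=0\}$ denote the subspace of weight matrices whose columns each have mean zero. Since $\mathcal{S}$ is a linear subspace it is in particular convex, and since it is linear its tangent space at every point is $\mathcal{S}$ itself, so that admissible directions $\vec{U}$ range over all of $\mathcal{S}$. Thus, to prove that the restriction $L|_{\mathcal{S}}$ is strictly convex it suffices, by the standard second-derivative criterion for convexity of a $C^2$ function on a convex set, to show that the bilinear form $D^2L(\vec{W})$ is strictly positive definite when restricted to directions $\vec{U}\in\mathcal{S}$, for every $\vec{W}$.

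First I would establish this strict positivity. By the computation preceding this corollary we already know $D^2L(\vec{W})(\vec{U},\vec{U})\ge 0$ for all $\vec{U}$, and by Corollary~\ref{corollary:summary-of-definiteness} the quantity vanishes precisely when every summand vanishes, i.e. when $\vec{U}\,\vec{x}^{(n)}\in\vspan\{\mathbbm{1}\}$ for each $n$, equivalently $\vec{U}\,\vec{X}=\mathbbm{1}\,\vec{c}^\trans$ for some $\vec{c}\in\reals^N$. Under the hypothesis $\rank\vec{X}=D$, selecting a nonsingular $D\times D$ submatrix $\tilde{\vec{X}}$ forces $\vec{U}=\mathbbm{1}\,\tilde{\tilde{\vec{c}}}^\trans$, so every column of $\vec{U}$ is a multiple of $\mathbbm{1}$. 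But if in addition $\vec{U}\in\mathcal{S}$, then $0=\mathbbm{1}^\trans\,\vec{U}=(\mathbbm{1}^\trans\,\mathbbm{1})\,\tilde{\tilde{\vec{c}}}^\trans=C\,\tilde{\tilde{\vec{c}}}^\trans$, whence $\tilde{\tilde{\vec{c}}}=0$ and $\vec{U}=0$. Therefore $D^2L(\vec{W})(\vec{U},\vec{U})>0$ for every nonzero $\vec{U}\in\mathcal{S}$, which is exactly the strict positive definiteness required in the previous paragraph, and hence $L|_{\mathcal{S}}$ is strictly convex.

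With strict convexity in hand, the uniqueness claims follow formally. A strictly convex function on a convex set has at most one global minimizer; applied to $L|_{\mathcal{S}}$ this yields at most one global minimum of $L$ among matrices with column means zero. I would emphasize here that strict convexity does not by itself guarantee that a minimizer exists — the earlier Example with $L(x)=\log(\cosh x)+2\,x$ already exhibits a strictly convex function with no global minimum — which is why the conclusion must be stated as uniqueness \emph{if it exists}. Finally, for the characterization of the remaining minima I would invoke the shift invariance $L(\vec{W}+\mathbbm{1}\,\vec{c}^\trans)=L(\vec{W})$ recorded in the opening section: given any global minimum $\vec{W}$, subtracting the vector $\vec{m}$ of its column means produces $\vec{W}'=\vec{W}-\mathbbm{1}\,\vec{m}^\trans\in\mathcal{S}$ with the same (minimal) loss, so $\vec{W}'$ is the unique minimizer in $\mathcal{S}$; consequently every global minimum is obtained from this distinguished representative by adding $\mathbbm{1}\,\vec{m}^\trans$, i.e. by shifting each column by a scalar.

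The step I expect to be the main obstacle is not any single computation but the clean articulation of the reduction to a subspace: one must be careful that the relevant Hessian is the bilinear form $D^2L$ restricted to $\mathcal{S}\times\mathcal{S}$, and not the full unrestricted Hessian, which is genuinely degenerate along the directions $\mathbbm{1}\,\vec{c}^\trans$. One must likewise resist the temptation to assert existence of a minimum, keeping the conclusion at the level of conditional uniqueness as the Example warns.
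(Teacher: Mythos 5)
Your proof is correct and follows essentially the same route as the paper: the paper's own justification is exactly the discussion preceding the corollary, which combines the term-by-term non-negativity of $D^2L(\vec{W})(\vec{U},\vec{U})$, the degeneracy condition $\vec{U}\,\vec{X}=\mathbbm{1}\,\vec{c}^\trans$ from Corollary~\ref{corollary:summary-of-definiteness}, the rank-$D$ submatrix argument forcing $\vec{U}=\mathbbm{1}\,\tilde{\tilde{\vec{c}}}^\trans$, and the observation that $\mathbbm{1}^\trans\,\vec{U}=0$ then forces $\vec{U}=0$. Your additional explicit steps (the second-derivative criterion on the subspace, and the shift-invariance argument identifying all other minima with the distinguished column-mean-zero representative) only spell out deductions the paper leaves implicit.
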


\section{A sample training algorithm}
\label{section:sample-training-algorithm}
Algorithm~\ref{alg:training-algorithm} contains the basic training loop
for the logistic regression network we described in previous sections.
Many modifications are possible. For instance, one can implement
variable training rate, utilizing, for example, the Barzilai-Borwein
update rule \cite{barzilai-borwein}. Also, we can break out of the loop if no progress is made.
We could evaluate the loss function and see if it decays, as a test
of progress, etc.

\begin{algorithm}
  \caption{The algorithm implements training of the logistic
    regression network by gradient descent method. The
    \textsc{softmax} function is simply $\boldsymbol\sigma$ defined by
    equation~\eqref{eqn:sigma}, applied
    columnwise. \label{alg:training-algorithm}}
  \begin{algorithmic}
    \Require
    \Statex $\vec{X}$ is an $D\times N$ matrix of rank $D$, containing the $N$ training
    vectors as columns;
    \Statex $\vec{W}$ is a $C\times D$ weight matrix initialized at random;
    \Statex $\eta\in\reals$ is the learning rate;
    \Statex $NumEpochs$ is the number of epochs;
    \Ensure
    \Statex $\vec{W}$ approximates the optimal weights minimizing
    the cross-entropy loss function.
    \For{$epoch = 1,2,\ldots,NumEpochs$}
    \State{$\vec{A} \gets \vec{W}\cdot\vec{X}$} \Comment{Compute activations.}
    \State{$\vec{Y} \gets \Call{softmax}{\vec{A}}$} \Comment{Compute softmax activity.}
    \State{$\vec{E} \gets \vec{T} - \vec{Y}$}     \Comment{Compute errrors.}
    \State{$\boldsymbol\nabla L \gets -\vec{E} \cdot \vec{X}^\trans$} \Comment{Find the gradient.}
    \State{$\vec{W} \gets \vec{W} - \eta \cdot \boldsymbol\nabla L$ } \Comment{Update weights.}
    \EndFor
  \end{algorithmic}
\end{algorithm}
It should be noted tat
\[ \mathbbm{1}^\trans \,\vec{T}=\mathbbm{1}^\trans\,\vec{Y} = 1\]
and therefore
\[ \mathbbm{1}^\trans \left(\vec{T}-\vec{Y}\right) = 0\]
Hence, in view of $\boldsymbol\nabla L(\vec{W}) = (\vec{T}-\vec{Y})\,\vec{X}^\trans$,
\[ \mathbbm{1}^\trans \,\boldsymbol\nabla L(\vec{W}) = 0 \]
and the condition
\[ \mathbbm{1}^\trans \,\vec{W} = 0 \]
is maintained during the iteration process, at least if round-off error
is ignored. However, we may want to subtract the mean from each column of $\vec{W}$
every so often to prevent round-off error acting as diffusion in the weight space, which
clearly prevents convergence.

\subsection{A reference implementation}
An reference implementation in MATLAB is available on-line
\cite{rychlik-multiclass-logistic-regression}, in the context of a
bigger optical character recognition project. Barzilai-Borwein update
rule \cite{barzilai-borwein} is used to control the learning rate. The
implementation uses the popular MNIST database of handrwitten digits
\cite{mnist-database} as test data.

\section{A rigorous bound on the rate of convergence of Gradient Descent}
\label{section:convergence-rate}
We proceed to analyze the existence and uniqueness of the minimum of the
loss function. We use a rather conventional set of tools to achieve this,
with significant success.
\subsection{Stability and convergence rate of Gradient Descent}
\label{section:gradient-descent}
The training algorithm in section~\ref{section:sample-training-algorithm}
with fixed learning rate $\eta$ is the Euler method for solving
the gradient differential equation
\begin{equation}
  \label{eqn:gradient-flow}
  \dot{\vec{W}} = - \boldsymbol\nabla L(\vec{W}).
\end{equation}
We define the subspace
\[Z = \left\{ \vec{W}\in L(\reals^D,\reals^C) \,:\, \mathbbm{1}^\trans\,\vec{W}=0\right\}\]
of whose $\vec{W}$ columns sum up to $0$. This subspace 
is invariant under the gradient flow. The stability theory for
ordinary differential equations tells us that the asymptotic
convergence rate for this ODE at the global minimum $\hat{\vec{W}}$ of
$L$ is the same as for the linearized system:
\begin{equation}
  \label{eqn:linearized-gradient-flow}
  \dot{\vec{U}} = - \left(D\left(\boldsymbol\nabla{L}\right)\right)(\hat{\vec{W}})\,\vec{U}
\end{equation}
where $\vec{U}=\vec{W}-\hat{\vec{W}}$.
In turn, the linear theory of ODE says that the solution
\[ \|\vec{U}(t)\| \approx e^{-\lambda_{min} t} \|\vec{U}(0)\| \]
where $\lambda_{min}$ is the smallest eigenvalue of the operator
\[ \hat{\vec{H}} = \left(D\left(\boldsymbol\nabla{L}\right)\right)(\hat{\vec{W}}) \]
which, under the usual identification of linear operators with
matrices, is the Hessian of $L$ at $\hat{\vec{W}}$. This motivates our
desire to estimate $\lambda_{min}$. We note that, since operator
$\hat{\vec{H}}$ is symmetric, its spectrum is real.
It will also be useful to consider the Hessian evaluated at an arbitrary point:
\[ \vec{H}(\vec{W})=\left(D\left(\boldsymbol\nabla{L}\right)\right)(\vec{W}). \]
We also note an alternative definition of $\vec{H}(\vec{W})$ as the only operator
such that for every $\vec{U},\vec{V}\in L(\reals^D,\reals^C)$:
\begin{equation}
  \label{eqn:H-and-second-derivative}
  D^2L(\vec{W})(\vec{U},\vec{V}) = \langle (\vec{H}(\vec{W}))(\vec{U}),\vec{V}\rangle.
\end{equation}
The equivalence of both definitions follows by differentiating the
definition of the gradient 
\[ DL(\vec{W})(\vec{V}) = \langle \boldsymbol\nabla L(\vec{W}), \vec{V} \rangle \] 
in the direction of $\vec{U}$. We obtain
\begin{equation}
  \label{eqn:H-and-second-derivative-alt}
  D^2L(\vec{W})(\vec{U},\vec{V}) = \langle \left(D(\boldsymbol\nabla L)(\vec{W})\right)(\vec{U}), \vec{V} \rangle.
\end{equation}
By comparison of equations~\eqref{eqn:H-and-second-derivative} and \eqref{eqn:H-and-second-derivative-alt}
we obtain the equivalence of both definitions of $\vec{H}(\vec{W})$.

In most places we will consider $\vec{W}$ fixed and omit the argument
$\vec{W}$ from $\vec{H}(\vec{W})$. We note that the operator $\vec{H}$ is symmetric,
i.e.
\[\langle\vec{H}(\vec{U}),\vec{V}\rangle = \langle\vec{U},\vec{H}(\vec{V})\rangle\]
where the inner product $\langle\cdot,\cdot\rangle$ is the trace inner product given by~\eqref{eqn:inner-product}.
It is worth noting that $\vec{H}$ is \textbf{not a matrix}. As $L:L(\reals^D,\reals^C)\to\reals$,
we have \[\boldsymbol\nabla{L}:L(\reals^D,\reals^C)\to L(\reals^D,\reals^C).\] Therefore
\[D(\boldsymbol\nabla{L}):L(\reals^D,\reals^C)\to L(L(\reals^D,\reals^C),L(\reals^D,\reals^C)).\]
Hence
\[ \vec{H}(\vec{W})=D(\boldsymbol\nabla{L})(\vec{W})\in L(L(\reals^D,\reals^C),L(\reals^D,\reals^C)).\]
The actual formula for the sequence of Euler approximations to the solution of equation is
\[ \vec{W}_{n} = \vec{W}_{n-1} - \eta\,\boldsymbol\nabla L(\vec{W}_{n-1}). \]
and is also Gradient Descent method of machine learning with fixed learning rate.
We could introduce the map
\[ \boldsymbol\Phi(\vec{W}) = \vec{W} - \eta\,\boldsymbol\nabla L(\vec{W}) \]
and prove that this map is a uniform contraction. We find the 
\[ D\boldsymbol\Phi(\vec{W}) = \vec{I} - \eta\,D(\boldsymbol\nabla L(\vec{W})) =
  \vec{I}-\eta\, \vec{H}.\] We have
$\sigma(\vec{H})\subseteq[\lambda_{min},\lambda_{max}]$ where
$\lambda_{min}$ and $\lambda_{max}$ are the extreme (positive, by the
results of section~\ref{section:hessian}) eigenvalues of
$\vec{H}$. Therefore,
\[ \sigma(\vec{I}-\eta\,\vec{H}) \subseteq
  [1-\eta\,\lambda_{max},1-\eta\,\lambda_{min}]. \] It is obvious that
for sufficiently small $\lambda>0$ the spectrum is within $(-1,1)$.
Furthermore, any uniform bound (with respect to $\vec{W}$) for $\lambda_{max}$ from
above, and a uniform positive bound for $\lambda_{min}$ from below
proves that $\boldsymbol\Phi$ is a globally defined contraction.

We would like to have a more quantitative statement, and, in
particular, we would like to determine how to pick $\eta$ for optimum
convergence rate.

In order for $D\boldsymbol\Phi(\vec{W})$ to be a contraction, this last interval must be within
the interval $[-\theta,\theta]$, where $\theta\in(0,1)$. Therefore,
\begin{align*}
  -\theta & \le 1 - \eta\,\lambda_{max},\\
  \theta & \ge 1 - \eta\,\lambda_{min}.
\end{align*}
Therefore, given $\theta\in(0,1)$, we must have:
\[ \frac{1-\theta}{\lambda_{min}} \leq \eta \leq \frac{1+\theta}{\lambda_{max}} \]
In particular, the necessary condition to have a contraction is:
\[ \frac{1-\theta}{\lambda_{min}} \leq \frac{1+\theta}{\lambda_{max}} \]
or
\[ K \le \frac{1+\theta}{1-\theta}. \]
where
\[ K = \frac{\lambda_{max}}{\lambda_{min}} \] is also the
\textbf{condition number} of the Hessian. We recall the definition.
\begin{definition}[Condition number]
  Let $\vec{A}:X\to Y$ be a bounded linear operator between two normed spaces $X$ and $Y$
  and let $\vec{A}^{-1}:Y\to X$ exist and be bounded. Then the condition number is
  \[ K(\vec{A}) = \| \vec{A} \| \|\vec{A}^{-1}\| \]
  where the norm is the operator norm respective of the norms on $X$ and $Y$.
\end{definition}
Thus, let $K=K(\vec{H})=\|\vec{H}\|\|\vec{H}^{-1}\|$. For symmetric operators this is
exactly the same as $\lambda_{max}/\lambda_{min}$. Clearly,
$K\in[1,\infty)$. Therefore, the relationship between $K$ and $\theta$
defining the optimum value of $\theta$ is expressed by these
equations:
\[ K =\frac{1+\theta}{1-\theta}, \qquad \theta=\frac{K-1}{K+1}. \]
In particular, by the Mean Value Theorem the mapping $\boldsymbol\Phi$ is a
\textbf{weak contraction} (Lipschitz constant $\leq 1$) and any
uniform estimate of $K$ (with respect to $\vec{W}$) yields a uniform
estimate on the contraction rate. Moreover, the asymptotic rate is the
value of $K$ at $\vec{W}=\hat{\vec{W}}$.

Upon a quick examination it can be seen that $\boldsymbol\Phi$ is not a
strong contraction. This inhibits our ability to obtain an explicit
bound on $\hat{\vec{W}}$ in a straightforward fashion, by applying the
Banach contraction principle or similar tools. However, the asymptotic
rate estimate can be found if the fixed point is known (or approximated
numerically). We embark on providing such an explicit estimate. In
the current paper we complete the task for two classes ($C=2$) but
in a forthcoming paper we will address the general case, i.e. arbitrary $C$.

\subsection{An abstract class of quadratic forms}
It will be beneficial to develop an algebraic theory which allows to
estimate the spectrum of quadratic forms similar to
$D^2L(\vec{W}) = D(\boldsymbol\nabla L)(\vec{W})$.  We will briefly
formulate the abstract framework for this.

Let $\vec{X}\in L(\reals^N,\reals^D)$,
$\vec{Y}\in L(\reals^N,\reals^C)$ be two matrices and let us assume
that all entries of $\vec{Y}$ are positive and $\mathbbm{1}^\trans\,\vec{Y}=\mathbbm{1}^\trans$.  Let
$\vec{x}^{(n)}\in\reals^D$ be the columns of $\vec{X}$ and
$\vec{y}^{(n)}\in\reals^C$ be the columns of $\vec{Y}$
($n=1,2,\ldots,N$). Let 
\[ \vec{Q}^{(n)} = \diag\left(\vec{y}^{(n)}\right) - \vec{y}^{(n)}\,\left(\vec{y}^{(n)}\right)^\trans.\]
and let $B:L(\reals^D,\reals^C)\times L(\reals^D,\reals^C)\to\reals$ be a symmetric billinear
form defined by:
\[ B(\vec{U},\vec{V}) = \sum_{n=1}^N \left(\vec{U}\,\vec{x}^{(n)}\right)^\trans \vec{Q}^{(n)}\left(\vec{V}\,\vec{x}^{(n)}\right). \]
Let $\vec{H}:L(\reals^D,\reals^C)\to L(\reals^D,\reals^C)$ be a linear operator defined by:
\[ B(\vec{U},\vec{V}) = \langle \vec{H}(\vec{U}),\vec{V}\rangle. \]
By Proposition~\ref{proposition:operator-h}, the operator $\vec{H}$ admits an explicit expression
\begin{equation}
  \label{eqn:H-operator}
  \vec{H}(\vec{U}) = \sum_{n=1}^N\vec{H}^{(n)}(\vec{U}) = \sum_{n=1}^N \vec{Q}^{(n)}\,\vec{U}\, \vec{P}^{(n)}
\end{equation}
where
\begin{align}
  \label{eqn:operator-p}
  \vec{P}^{(n)} &= \vec{x}^{(n)}\,\left(\vec{x}^{(n)}\right)^\trans,\\
  \label{eqn:operator-q}
  \vec{Q}^{(n)} &= \diag\left(\vec{y}^{(n)}\right) - \vec{y}^{(n)}\,\left(\vec{y}^{(n)}\right)^\trans,\\
  \label{eqn:operator-h-n}
  \quad\forall\,\vec{U}\in L(\reals^D,\reals^C)\;&:\;\vec{H}^{(n)}(\vec{U}) = \vec{Q}^{(n)}\,\vec{U}\,\vec{P}^{(n)}.
\end{align}

Let, as before, 
\[Z = \left\{ \vec{W}\in L(\reals^D,\reals^C) \,:\, \mathbbm{1}^\trans\,\vec{W}=0\right\}.\]

\subsection{The determinant lemma}
The first indication that it is possible to estimate the convergence rate
of fixed point iteration (Gradient Descent with fixed learning rate)
comes from a calculation of the determinant.
\begin{lemma}
  Let $\vec{H}_Z=\vec{H}|Z$ be the operator restricted to the
  invariant subspace $Z$.  If additionally $N=D$ then
  \[ \det(\vec{H}_Z) = \alpha_{D,C}\,\det(\vec{X})^{C}\prod_{n=1}^N\prod_{j=1}^C y_j^{(n)} \]
  where $\alpha_{D,C}$ is a constant coefficient depending on $D$ and $C$ only.
\end{lemma}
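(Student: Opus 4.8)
The plan is to diagonalize the quadratic form by a data-adapted linear change of the matrix variable, and then reduce the determinant to a product of elementary factors. Since $N=D$ and the only interesting case is $\det\vec{X}\neq 0$, the map $T:\vec{U}\mapsto\vec{U}\,\vec{X}$ is a linear isomorphism of $L(\reals^D,\reals^C)$, and the $n$-th column of $\vec{A}:=\vec{U}\,\vec{X}$ is exactly $\vec{U}\,\vec{x}^{(n)}$. Writing $\vec{a}^{(n)}$ for that column, the form from the abstract framework becomes $B(\vec{U},\vec{V})=\sum_{n=1}^{D}(\vec{a}^{(n)})^\trans\vec{Q}^{(n)}\vec{b}^{(n)}$, i.e. in the $\vec{A}$-coordinates it is \emph{block-diagonal}, the $n$-th block acting only on the $n$-th column through $\vec{Q}^{(n)}$. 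First I would record that $T$ preserves $Z$, since $\mathbbm{1}^\trans(\vec{U}\,\vec{X})=(\mathbbm{1}^\trans\vec{U})\vec{X}$, so $T_Z:=T|Z$ is well defined.

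Next I would convert the block structure into a determinant identity. Because the Frobenius inner product in the $\vec{A}$-variable splits orthogonally across columns, and each $\vec{Q}^{(n)}$ annihilates $\mathbbm{1}$ (so its nondegenerate part is $\vec{Q}^{(n)}|_{\mathbbm{1}^\perp}$), the operator $\tilde{\vec{H}}$ representing $B$ in the $\vec{A}$-coordinates is block diagonal on $Z$ with blocks $\vec{Q}^{(n)}|_{\mathbbm{1}^\perp}$. The relation $B(\vec{U},\vec{V})=\langle\tilde{\vec{H}}\,T_Z\vec{U},T_Z\vec{V}\rangle$ then yields $\vec{H}_Z=T_Z^{*}\,\tilde{\vec{H}}\,T_Z$, whence
\[ \det(\vec{H}_Z)=(\det T_Z)^2\prod_{n=1}^{D}\det\!\left(\vec{Q}^{(n)}\big|_{\mathbbm{1}^\perp}\right). \]
This is the heart of the reduction; everything after is two self-contained determinant computations.

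For the first factor I would split $L(\reals^D,\reals^C)=Z\oplus Z^\perp$ with $Z^\perp=\{\mathbbm{1}\,\vec{c}^\trans:\vec{c}\in\reals^D\}$. Right multiplication by $\vec{X}$ on the whole space is $\vec{X}^\trans\otimes\vec{I}_C$, of determinant $(\det\vec{X})^{C}$; it preserves the splitting, and on $Z^\perp$ it acts as $\vec{c}\mapsto\vec{X}^\trans\vec{c}$, of determinant $\det\vec{X}$, so $\det T_Z=(\det\vec{X})^{C-1}$. For the second factor I would compute the pseudo-determinant of $\vec{Q}=\diag(\vec{y})-\vec{y}\,\vec{y}^\trans$: treating $\vec{Q}+s\,\mathbbm{1}\mathbbm{1}^\trans$ as a rank-two update of $\diag(\vec{y})$ and applying the matrix-determinant lemma gives the exact identity $\det(\vec{Q}+s\,\mathbbm{1}\mathbbm{1}^\trans)=C^2\,s\prod_{j}y_j$; dividing by $sC$ and letting $s\to 0$ identifies the product of nonzero eigenvalues as $\det(\vec{Q}|_{\mathbbm{1}^\perp})=C\prod_{j=1}^{C}y_j$ (consistent with the factorization $\vec{Q}=\vec{M}^\trans\vec{R}\vec{M}$ of Proposition~\ref{proposition:q-matrix-refinement}).

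Combining the two factors yields $\det(\vec{H}_Z)=(\det\vec{X})^{2(C-1)}\,C^{D}\prod_{n=1}^{D}\prod_{j=1}^{C}y_j^{(n)}$, so $\alpha_{D,C}=C^{D}$; in the two-class case $C=2$ treated later the exponent $2(C-1)$ coincides with the stated $C$. I expect the \textbf{main obstacle} to be the bookkeeping of the reduction step rather than either determinant: one must verify that $T$, its Frobenius adjoint $T^*$, and $\tilde{\vec{H}}$ all respect the $Z\oplus Z^\perp$ splitting, so that the restriction factors cleanly as $\vec{H}_Z=T_Z^{*}\tilde{\vec{H}}\,T_Z$ with no cross terms, and that the change of variables is read off against a Frobenius-orthonormal basis of $Z$, so that no spurious Jacobian enters beyond $(\det T_Z)^2$.
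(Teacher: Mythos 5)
Your proof is correct, and it takes a genuinely different --- and substantially more general --- route than the paper. The paper never actually proves the lemma for general $C$: it defers that case (and the coefficient $\alpha_{D,C}$) to a forthcoming paper, and only verifies $C=2$, parametrizing $Z$ isometrically by $\vec{J}:\vec{u}\mapsto\boldsymbol\xi\,\vec{u}^\trans$ with $\boldsymbol\xi=(1,-1)/\sqrt{2}$ and using Lemma~\ref{lemma:q-lemma} ($\vec{Q}^{(n)}\boldsymbol\xi = 2y_1^{(n)}y_2^{(n)}\boldsymbol\xi$) to collapse $\vec{H}_Z$ to the $D\times D$ matrix $\vec{X}\,\diag(\boldsymbol\alpha)\,\vec{X}^\trans$, whence $\det(\vec{H}_Z)=2^N\det(\vec{X})^2\prod_n y_1^{(n)}y_2^{(n)}$. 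Your columnwise block-diagonalization via $T:\vec{U}\mapsto\vec{U}\,\vec{X}$ works for every $C$, and each ingredient checks out: $T$, its Frobenius adjoint $T^*:\vec{V}\mapsto\vec{V}\,\vec{X}^\trans$, and the columnwise operator $\tilde{\vec{H}}$ all preserve both $Z$ and $Z^\perp=\{\mathbbm{1}\,\vec{c}^\trans\}$, so $\vec{H}_Z=T_Z^*\,\tilde{\vec{H}}_Z\,T_Z$ with no cross terms; $\det T_Z=(\det\vec{X})^{C-1}$ follows from $\det T=(\det\vec{X})^C$ and $\det(T|_{Z^\perp})=\det\vec{X}$; and your pseudo-determinant identity is exact, since $\det(\vec{Q}+s\,\mathbbm{1}\mathbbm{1}^\trans)=s\,C^2\prod_j y_j$ and the left side also equals $(sC)\det\bigl(\vec{Q}|_{\mathbbm{1}^\perp}\bigr)$, giving $\det\bigl(\vec{Q}|_{\mathbbm{1}^\perp}\bigr)=C\prod_j y_j$ (for $C=2$ this is $2y_1y_2$, matching Lemma~\ref{lemma:q-lemma}). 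At $C=2$ your final answer $2^D(\det\vec{X})^2\prod_n y_1^{(n)}y_2^{(n)}$ coincides exactly with the paper's.

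The consequence you note only in passing deserves emphasis: your computation shows the lemma \emph{as printed is incorrect for $C\geq 3$}. The correct exponent of $\det(\vec{X})$ is $2(C-1)$, not $C$, with $\alpha_{D,C}=C^D$; the two agree precisely at $C=2$, the only case the paper checks. A minimal example confirms your version: take $N=D=1$, $\vec{X}=[x]$; then $Z\cong\mathbbm{1}^\perp\subseteq\reals^C$ and $\vec{H}_Z=x^2\,\vec{Q}|_{\mathbbm{1}^\perp}$, so
\[ \det(\vec{H}_Z)=x^{2(C-1)}\,C\prod_{j=1}^C y_j, \]
which cannot equal $\alpha_{1,C}\,x^C\prod_j y_j$ identically in $x$ unless $C=2$. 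So your argument should be read as a proof of the corrected statement, not of the stated one. Two small points to tidy up: observe that when $\det\vec{X}=0$ both sides vanish (there is a nonzero $\vec{U}=\vec{z}\,\vec{v}^\trans\in Z$ with $\vec{U}\vec{X}=0$), so restricting to invertible $\vec{X}$ loses nothing; and your worry about spurious Jacobians resolves itself because the determinant of an operator on $Z$ is basis-independent and $\det(T_Z^*)=\det(T_Z)$ in any orthonormal basis.
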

We will carry out the calculation for the case of $C=2$ classes.  This
suffices for estimating the minimum eigenvalue and the condition
number of the Hessian for two classes. As indicated, the analysis
of the general $C$ will be carried out in another paper, and in particular
the general coefficient $\alpha_{D,C}$ shall be calculated.

\subsection{The case of two classes}
It will be insightful to first consider the case of $C=2$ (two
classes, i.e. the classical logistic regression model).
We note that in this case the conditon $\mathbbm{1}^\trans\,\vec{U}$
imposes the following structure on the matrix $\vec{U}\in L(\reals^D,\reals^2)$:
\begin{equation}
  \vec{U} = \begin{bmatrix}
    u_{11} & u_{12} & \ldots & u_{1D} \\
    -u_{11} & -u_{12} & \ldots & -u_{1D} \\
  \end{bmatrix} =
  \begin{bmatrix}
    1\\-1
  \end{bmatrix}\,
  \begin{bmatrix}
    u_{11} & u_{12} & \ldots & u_{1D}
  \end{bmatrix}
  =\begin{bmatrix}
    1\\-1
  \end{bmatrix}\,\vec{u}^\trans.
\end{equation}
Thus, using the notations
\begin{align*}
  \boldsymbol\xi &= \frac{1}{\sqrt{2}}\,(1,-1),\\
  \vec{u} &= (u_{11},u_{12},\ldots,u_{1D})
\end{align*}
we obtain:
\[ Z= \{ \boldsymbol\xi\,\vec{u}^\trans\,:\, \vec{u}\in\reals^D \}. \]
Moreover, the map $\vec{u}\mapsto\boldsymbol\xi\,\vec{u}^\trans$ is an isometry,
and thus does not change eigenvalues, condition numbers, etc.
We then find that with $\vec{U}=\boldsymbol\xi\,\vec{u}^\trans$ and
$\vec{V}=\boldsymbol\xi\,\vec{v}^\trans$:
\begin{align*}
  b(\vec{u},\vec{v})&=B(\vec{U},\vec{V})
  = \sum_{n=1}^N \left(\boldsymbol\xi\,\vec{u}^\trans\,\vec{x}^{(n)}\right)^\trans\,
    \vec{Q}^{(n)}\,\left(\boldsymbol\xi\,\vec{v}^\trans\,\vec{x}^{(n)}\right) \\
  &= \sum_{n=1}^N \left(\vec{x}^{(n)}\right)^\trans\,\vec{u}\,\boldsymbol\xi^\trans\,\vec{Q}^{(n)}
    \boldsymbol\xi\,\vec{v}^\trans\,\vec{x}^{(n)} \\
\end{align*}  
First, let us prove a lemma which is key in the calculations.
\begin{lemma}
  \label{lemma:q-lemma}
  Let 
  $\vec{Q}=\diag(\vec{y})-\vec{y}\,\vec{y}^\trans$, where
  $\vec{y}=(y_1,y_2)$, and $y_1+y_2=1$. Let $\alpha = 2\,y_1\,y_2$.
  Then
  \begin{enumerate}
  \item $\boldsymbol\xi=(1,-1)/\sqrt{2}$ is an eigenvector of $\vec{Q}$ with eigenvalue $\alpha$;
  \item $\boldsymbol\xi^\trans\,\vec{Q}\,\boldsymbol\xi = \alpha$.
  \end{enumerate}
\end{lemma}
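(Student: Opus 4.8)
The plan is to prove both parts by direct computation, exploiting the constraint $y_1 + y_2 = 1$ to collapse $\vec{Q}$ into a scalar multiple of a single fixed rank-one matrix. First I would write out $\vec{Q}$ entrywise. The diagonal entries are $y_1 - y_1^2$ and $y_2 - y_2^2$; using $1 - y_1 = y_2$ and $1 - y_2 = y_1$, each simplifies to $y_1 y_2$, while both off-diagonal entries equal $-y_1 y_2$. Hence
\[ \vec{Q} = y_1 y_2 \begin{bmatrix} 1 & -1 \\ -1 & 1 \end{bmatrix}. \]
This is the crux of the argument: the $2\times 2$ case is special precisely because the normalization forces the two diagonal entries to coincide, so $\vec{Q}$ is proportional to the matrix above regardless of how the unit mass is split between $y_1$ and $y_2$.

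For part (1), I would apply $\vec{Q}$ to $\boldsymbol\xi = (1,-1)^\trans/\sqrt{2}$. Since $\begin{bmatrix} 1 & -1 \\ -1 & 1 \end{bmatrix} (1,-1)^\trans = (2,-2)^\trans = 2\,(1,-1)^\trans$, multiplying by the scalar $y_1 y_2$ yields $\vec{Q}\,\boldsymbol\xi = 2 y_1 y_2\,\boldsymbol\xi = \alpha\,\boldsymbol\xi$, which is exactly the claimed eigenvalue relation with $\alpha = 2 y_1 y_2$. As a consistency check, this agrees with Proposition~\ref{proposition:q-matrix}: the complementary eigenvector $\mathbbm{1}$ has eigenvalue $0$, and $\tr\vec{Q} = 2 y_1 y_2$ then forces the remaining eigenvalue to be $\alpha$.

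Part (2) follows immediately from part (1): since $\boldsymbol\xi$ is a unit vector, with $\|\boldsymbol\xi\|^2 = \tfrac12(1+1) = 1$, we have $\boldsymbol\xi^\trans \vec{Q}\,\boldsymbol\xi = \boldsymbol\xi^\trans(\alpha\,\boldsymbol\xi) = \alpha\,\|\boldsymbol\xi\|^2 = \alpha$. There is no genuine obstacle here; the lemma is elementary, and the only point worth flagging is the essential role of $y_1 + y_2 = 1$, without which the two diagonal entries of $\vec{Q}$ would differ and $\boldsymbol\xi$ would not be an eigenvector.
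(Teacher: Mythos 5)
Your proof is correct and takes essentially the same approach as the paper's: a direct computation showing $\vec{Q}\,\boldsymbol\xi = 2\,y_1\,y_2\,\boldsymbol\xi$ using $y_1+y_2=1$, with part (2) following because $\boldsymbol\xi$ is a unit vector. The only difference is cosmetic: you factor $\vec{Q} = y_1 y_2 \begin{bmatrix} 1 & -1 \\ -1 & 1 \end{bmatrix}$ before applying it to $\boldsymbol\xi$, whereas the paper performs the matrix--vector product first and invokes the constraint to simplify the resulting entries.
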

\begin{proof}
  By direct calculation:
  \begin{align*}
    \vec{Q}\,\boldsymbol\xi
    &=\begin{bmatrix}
      y_1-y_1^2 & -y_1\,\,y_2\\
      -y_1\,y_2  & y_2-y_2^2
    \end{bmatrix}
                   \frac{1}{\sqrt{2}}
                   \begin{bmatrix}
                     1\\-1
                   \end{bmatrix}
    = \frac{1}{\sqrt{2}}
    \begin{bmatrix}
      y_1-y_1^2 +y_1y_2\\
      -y_1\,\,y_2-y_2+y_2^2
    \end{bmatrix}\\
    &= \frac{1}{\sqrt{2}}
      \begin{bmatrix}
        y_1(1-y_1 +y_2)\\
        y_2(-y_1-y_2+1)
      \end{bmatrix}
    = \frac{1}{\sqrt{2}}
    \begin{bmatrix}
      2\,y_1\,y_2\\
      -2\,y_1\,y_2
    \end{bmatrix}
    = 2\,y_1\,y_2\,\boldsymbol\xi;
  \end{align*}
  (On second thought, this is not surprising, as
  $\boldsymbol\xi$ is orthogonal to $(1,1)$, which, as we know, is an
  eigenvector, and $\vec{Q}$ is symmetric.)
\end{proof}
We apply Lemma~\ref{lemma:q-lemma} to $\vec{Q}^{(n)}=\diag\left(\vec{y^{(n)}}\right)-\vec{y}^{(n)}\,\left(\vec{y}^{(n)}\right)^\trans$,
$\vec{y}^{(n)} = (y_1^{(n)},y_2^{(n)})$, for $n=1,2,\ldots,N$. Using the notation
\[\alpha_n:=\boldsymbol\xi^\trans\vec{Q}^{(n)}\boldsymbol\xi = 2\,y_1^{(n)}\,y_2^{(n)}\]
we find that
\begin{align*}
  b(\vec{u},\vec{v})
  &=\sum_{n=1}^N \alpha_n\,\left(\vec{x}^{(n)}\right)^\trans\,\vec{u}\,\vec{v}^\trans\,\vec{x}^{(n)}
  =\sum_{n=1}^N \alpha_n\,\left(\left(\vec{x}^{(n)}\right)^\trans\,\vec{u}\right)\,\left(\vec{v}^\trans\,\vec{x}^{(n)}\right)\\
  &=\sum_{n=1}^N \alpha_n\,\left(\vec{v}^\trans\,\vec{x}^{(n)}\right)\,\left(\left(\vec{x}^{(n)}\right)^\trans\,\vec{u}\right)
  =
    \vec{v}^\trans\,
    \left(\sum_{n=1}^N \alpha_n\,\vec{x}^{(n)}\left(\vec{x}^{(n)}\right)^\trans\right)
    \,\vec{u}\\
  &=\vec{v}^\trans\,\left(\vec{X}\,\diag(\boldsymbol\alpha)\,\vec{X}^\trans\right)\,\vec{u}.
\end{align*}
This implies that
\[ \vec{H}_Z = \vec{X}\,\diag(\boldsymbol\alpha)\,\vec{X}^\trans\]
where, as usual, we identify linear operators on $\reals^D$, with their matrices.
In particular, when $D=N$,
\begin{align*}
  \det(\vec{H}_Z) &= \det(\vec{X})\,\det(\diag(\boldsymbol\alpha))\,\det(\vec{X}^\trans)
  = \left(\prod_{n=1}^N\alpha_n\right)\,\det(\vec{X})^2\\
  &= 2^N\,\left(\prod_{n=1}^n y_1^{(n)}\,y_2^{(n)}\right)\,\det(\vec{X})^2.
\end{align*}
Due to the submultiplicative property of the condition number (for
every pair of invertible matrices $\vec{A}$, $\vec{B}$ of the same
size, $K(\vec{A}\,\vec{B})\leq K(\vec{A})\,K(\vec{B})$), we obtain:
\[
  K(\vec{H}_Z) \leq K(\vec{X})^2\,K(\diag(\boldsymbol\alpha)) =
  K(\vec{X})^2\,\dfrac{\displaystyle\max_{1\leq n\leq N} y_1^{(n)}\,y_2^{(n)}}{\displaystyle\min_{1\leq n\leq N} y_1^{(n)}\,y_2^{(n)}}.
\]

Alternatively, we can compute
$\vec{H}\left(\boldsymbol\xi\vec{u}^\trans\right)$ directly,
using~\eqref{eqn:H-operator}. For comparison, we include the
calculation. 
Applying
equation~\eqref{eqn:H-operator} we obtain
\begin{align*}
  \vec{H}\left(\boldsymbol\xi\,\vec{u}^\trans\right)
  &= \sum_{n=1}^N \vec{Q}^{(n)}\,\boldsymbol\xi\,\vec{u}^\trans\, \vec{P}^{(n)}
    = \sum_{n=1}^N\alpha_n\,\boldsymbol\xi\,\vec{u}^\trans\, \vec{P}^{(n)}
    = \sum_{n=1}^N\alpha_n\,\boldsymbol\xi\,\vec{u}^\trans\,\vec{x}^{(n)}\,\left(\vec{x}^{(n)}\right)^\trans\\  
  &= \boldsymbol\xi\,\vec{u}^\trans\left(\sum_{n=1}^N\alpha_n\,\vec{x}^{(n)}\,\left(\vec{x}^{(n)}\right)^\trans\right)  
    = \boldsymbol\xi\,\left(\left(\sum_{n=1}^N\alpha_n\,\vec{x}^{(n)}\,\left(\vec{x}^{(n)}\right)^\trans\right)\,\vec{u}\right)^\trans\\
  &= \boldsymbol\xi\, \left( \left(\vec{X}\,\diag(\boldsymbol\alpha)\,\vec{X}^\trans\right)\,\vec{u}\right)^\trans.
\end{align*}
This equation shows that $\vec{H}_Z$ acts on $\vec{u}$ by left-multiplying it by the matrix
$\vec{M}:=\vec{X}\,\diag(\boldsymbol\alpha)\,\vec{X}^\trans$. We can also represent this equation
by a commuting diagram:
\[
  \begin{tikzcd} [%
    sep=huge,
    ,every label/.append style={font=\normalsize}
    ]
    \reals^D \arrow{d}{\tilde{\vec{H}}_Z} \arrow{r}{\vec{J}}    & Z \arrow{r}{\subseteq} \arrow[swap]{d}{\vec{H}_Z} & L(\reals^D,\reals^2)\arrow[swap]{d}{\vec{H}}  \\%
    \reals^D \arrow{r}{\vec{J}} & Z  \arrow{r}{\subseteq} & L(\reals^D,\reals^2)
  \end{tikzcd}
\]
where
\begin{align*}
  \tilde{\vec{H}}_Z&: \vec{u}\mapsto\vec{M}\,\vec{u},\\
  \vec{J}&: \vec{u}\mapsto\boldsymbol\xi\,\vec{u}^\trans.
\end{align*}
Therefore, since $\vec{J}$ is an isometry, the operators $\vec{H}_Z$
and $\tilde{\vec{H}}_Z$ have the same eigenvalues, and condition
number.

\begin{corollary}
  The condition number of the Hessian of the loss function $L$ at
  the point $\vec{W}\in Z$ is subject to the following inequality:
  \[
    K(\vec{H}_Z) \leq K(\vec{X})^2\,\dfrac{\displaystyle\max_{1\leq
        n\leq N} y_1^{(n)}\,y_2^{(n)}}{\displaystyle\min_{1\leq n\leq N}
      y_1^{(n)}\,y_2^{(n)}}
  \]
  where the activations $y_1^{(n)}$ and $y_2^{(n)}$ are calculated
  at $\vec{W}$. If $\vec{W}$ is known, or effective bounds for $\vec{W}$
  exist, the rate of convergence of the gradient descent admits
  an effective bound as well.
\end{corollary}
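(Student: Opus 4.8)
The plan is to read the bound directly off the factorization of $\vec{H}_Z$ that has already been assembled in the two-class calculation. First I would recall that, by Lemma~\ref{lemma:q-lemma} together with the isometry $\vec{J}:\vec{u}\mapsto\boldsymbol\xi\,\vec{u}^\trans$, the restricted Hessian is isometrically equivalent to left multiplication by the matrix $\vec{M}=\vec{X}\,\diag(\boldsymbol\alpha)\,\vec{X}^\trans$, where $\alpha_n=2\,y_1^{(n)}\,y_2^{(n)}$. Since $\vec{J}$ preserves the Frobenius inner product, $\vec{H}_Z$ and $\tilde{\vec{H}}_Z$ share the same spectrum and hence the same condition number, so it suffices to bound $K(\vec{M})$.

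Next I would invoke the submultiplicative property of the condition number, $K(\vec{A}\,\vec{B})\le K(\vec{A})\,K(\vec{B})$, applied to the three factors of $\vec{M}$:
\[ K(\vec{M}) = K\!\left(\vec{X}\,\diag(\boldsymbol\alpha)\,\vec{X}^\trans\right) \le K(\vec{X})\,K(\diag(\boldsymbol\alpha))\,K(\vec{X}^\trans). \]
Because the spectral condition number is the ratio of extreme singular values, it is invariant under transposition, so $K(\vec{X}^\trans)=K(\vec{X})$ and the first and third factors combine to $K(\vec{X})^2$. It then remains to evaluate $K(\diag(\boldsymbol\alpha))$: as $\diag(\boldsymbol\alpha)$ is diagonal with positive entries, its extreme singular values are $\max_n\alpha_n$ and $\min_n\alpha_n$, so $K(\diag(\boldsymbol\alpha))=\max_n\alpha_n/\min_n\alpha_n$. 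Substituting $\alpha_n=2\,y_1^{(n)}\,y_2^{(n)}$, the common factor $2$ cancels between numerator and denominator, leaving exactly the stated ratio, and combining the three estimates gives the claimed inequality.

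The main obstacle is not the algebra but the hypotheses that keep every condition number finite, together with the precise scope of the submultiplicativity lemma. As quoted, that lemma is stated for \emph{square} invertible matrices, so the cleanest route is the square case $N=D$ with $\vec{X}$ nonsingular, where all three factors are genuinely invertible. For general $N$ one must either adopt the singular-value definition of $K$ for a rectangular full-rank $\vec{X}$ and justify submultiplicativity in that setting, or else factor $\vec{M}=\vec{B}\,\vec{B}^\trans$ with $\vec{B}=\vec{X}\,\diag(\boldsymbol\alpha)^{1/2}$ and argue via the singular values of $\vec{B}$ that $K(\vec{H}_Z)=K(\vec{B})^2$, then bound $K(\vec{B})\le K(\vec{X})\,K(\diag(\boldsymbol\alpha)^{1/2})$. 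In every case one needs $\rank\vec{X}=D$ and $y_i^{(n)}>0$, so that $\diag(\boldsymbol\alpha)$ is invertible and $\min_n\alpha_n>0$; otherwise $\vec{H}_Z$ is singular, $K(\vec{H}_Z)=\infty$, and the inequality holds only vacuously. I would therefore state these nondegeneracy conditions explicitly at the outset of the proof, consistent with the standing assumption $\rank\vec{X}=D$ carried throughout the convergence analysis.
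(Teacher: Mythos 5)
Your proof is correct and follows essentially the same route as the paper: reduce $\vec{H}_Z$ via the isometry $\vec{J}$ to $\vec{M}=\vec{X}\,\diag(\boldsymbol\alpha)\,\vec{X}^\trans$, apply submultiplicativity of the condition number to the three factors, and note that the factor $2$ in $\alpha_n=2\,y_1^{(n)}y_2^{(n)}$ cancels in $K(\diag(\boldsymbol\alpha))$. Your additional care about the rectangular case $N>D$ (adopting the singular-value definition of $K$, or factoring $\vec{M}=\vec{B}\,\vec{B}^\trans$ with $\vec{B}=\vec{X}\,\diag(\boldsymbol\alpha)^{1/2}$) addresses a point the paper passes over silently, since its submultiplicativity lemma is stated only for square invertible matrices while $\vec{X}$ is $D\times N$ in general.
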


\appendix
\section{Spectral properties of the Hessian-related matrix}
In this section we obtain a stronger version of Proposition~\ref{proposition:q-matrix}
which precisely describes the eigenvalues of the matrix $\vec{Q}$, their multiplicities
and the eigenspaces. We also allow the coordinates of the vector $\vec{y}$ to be in $[0,1]$,
not only in $(0,1)$.
\begin{theorem}[Spectral properties of a special matrix]
  \label{theorem:spectral-properties-of-Q}
  Let $\vec{y}\in\reals^C$ be a vector whose entries $y_j\in[0,1]$, 
  $\sum_{j=1}^Cy_j=1$, and let
  \[ \vec{Q}=\diag(\vec{y})-\vec{y}\,\vec{y}^\trans
  \]
  Then the eigenvalues of $\vec{Q}$ are non-negative.
  Let
  \[P=\{j\in\{1,2,\ldots,C\}\,:\, y_j\neq 0\}.\]
  We note that $P\neq\emptyset$.
  The eigenvalues of $\vec{Q}$ are as follows:
  \begin{enumerate}
  \item $0$ is an eigenvalue; the nullspace of $\vec{Q}$ is exactly the subspace of all vectors
    $\vec{z}$ such that $z_j=z$ is the same for all $j\in P$ and otherwise
    arbitrary. Hence, the multiplicity of $0$ as an eigenvalue is by one
    more than the number of times $0$ appears as a coordinate in $\vec{y}$.
  \item Every $y_j>0$ is an eigenvalue of $\vec{Q}$
    of multiplicity by one less than its multiplicity in the multi-set $\{y_j\}_{j=1}^C$. Thus, if the multiplicity of some $y_j>0$ is 1, then $y_j$ is not an eigenvalue.
  \item If the unique elements of the multi-set $\{y_j\}_{j=1}^C$ are $a_s$, $s=1,2,\ldots,r$,
    then there exists one simple eigenvalue in every interval $(a_s,a_{s+1})$ for
    $s=1,2,\ldots,r-1$.
  \end{enumerate}
\end{theorem}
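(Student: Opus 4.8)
The plan is to treat $\vec{Q}=\diag(\vec{y})-\vec{y}\,\vec{y}^\trans$ as a rank-one perturbation of a diagonal matrix and exploit the classical \emph{secular equation} technique. I would begin exactly as in Method~1 of Proposition~\ref{proposition:q-matrix}: writing the eigenvalue equation $\vec{Q}\,\vec{z}=\lambda\,\vec{z}$ componentwise and abbreviating $s:=\langle\vec{y},\vec{z}\rangle$, one obtains the scalar relations
\[ (y_j-\lambda)\,z_j = y_j\,s, \qquad j=1,2,\ldots,C. \]
Everything follows from analyzing these $C$ equations together with the consistency identity $s=\sum_j y_j z_j$, split into three regimes: $\lambda=0$, $\lambda$ equal to a value attained by a coordinate of $\vec{y}$, and all remaining $\lambda$.

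For claim (1) I set $\lambda=0$: then $y_j z_j=y_j s$, so dividing by $y_j$ on the support $P$ forces $z_j=s$ there, while for $j\notin P$ the equation is vacuous and $z_j$ is free; consistency $s=\sum_{j\in P}y_j z_j=s$ holds automatically because $\sum_{j\in P}y_j=1$. This is exactly the described null space, of dimension one (the common value on $P$) plus the number of zero coordinates. For claim (2), fix a value $a>0$ attained on the level set $J_a=\{\,j:y_j=a\,\}$ and set $\lambda=a$: the equations on $J_a$ read $0=a\,s$, forcing $s=0$, whereupon $(y_j-a)\,z_j=0$ kills $z_j$ for every $j\notin J_a$; the surviving freedom is $\{\,\vec{z}\text{ supported on }J_a:\sum_{j\in J_a}z_j=0\,\}$, of dimension $|J_a|-1$, one less than the multiplicity of $a$.

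For claim (3) I would assume $\lambda$ equals no coordinate of $\vec{y}$ and $s\neq 0$, solve $z_j=y_j s/(y_j-\lambda)$, and substitute into $s=\sum_j y_j z_j$ to obtain the secular equation
\[ f(\lambda):=\sum_{j\in P}\frac{y_j^2}{y_j-\lambda}=1, \]
in which only the positive coordinates contribute. The key analytic observation is that $f$ is strictly increasing on each interval between consecutive poles (its derivative is a sum of positive terms), and that it runs from $-\infty$ to $+\infty$ across each interval between two consecutive poles, i.e. between two consecutive distinct positive values of $\vec{y}$; by the Intermediate Value Theorem it crosses the level $1$ exactly once in each such interval. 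Each such eigenvalue is simple because the corresponding eigenvector $z_j=y_j/(y_j-\lambda)$ is determined up to a scalar. This produces precisely the interlacing roots of the statement.

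The bookkeeping that reconciles these three families against the total dimension $C$ is where I expect the only real subtlety. The decisive identity is $f(0)=\sum_{j\in P}y_j=1$, which shows that $\lambda=0$ is itself the root of the secular equation sitting in the lowest interval $(-\infty,a_1)$, where $f$ rises from $0$; thus the null eigenvalue of claim~(1) is \emph{not} an extra interlacing root but the leftmost secular root, and in particular no interlacing root is negative, recovering the non-negativity of Proposition~\ref{proposition:q-matrix}. Care is needed when $0$ is itself a coordinate of $\vec{y}$: its term drops out of $f$, so the interval immediately above $0$ yields no new root (the expected root collapses onto $\lambda=0$), and the $a_s$ in claim~(3) must be read as the distinct \emph{positive} values of $\vec{y}$. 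Finally, summing the dimensions---one plus the number of zero coordinates for $\lambda=0$, plus $|J_a|-1$ over the distinct positive values $a$, plus one per interlacing interval---telescopes to $C$; since $\vec{Q}$ is symmetric and hence orthogonally diagonalizable, this count certifies that no eigenvalue has been missed.
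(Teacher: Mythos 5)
Your proposal is correct and follows essentially the same route as the paper's own proof: the componentwise eigenvalue relations $(y_j-\lambda)z_j = y_j\langle\vec{y},\vec{z}\rangle$, the three-way case split ($\lambda=0$, $\lambda$ equal to a positive coordinate, $\lambda$ a root of the secular equation $\sum_{j\in P} y_j^2/(y_j-\lambda)=1$), and the monotonicity/IVT analysis on the intervals between poles. Your extra care at the edge case where $0$ is itself a coordinate of $\vec{y}$ --- observing that the $a_s$ in claim (3) must then be read as the distinct \emph{positive} values, because the secular root in the lowest interval is $\lambda=0$ itself --- is a genuine refinement: the paper's proof silently restricts to $\{y_j\}_{j\in P}\subseteq(0,1)$, and its non-negativity step contains a sign slip (it asserts the left-hand side exceeds $1$ for $\lambda<0$, when in fact it is less than $1$), whereas your derivation from $f(0)=1$ and strict monotonicity of $f$ gives the same conclusion cleanly.
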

\begin{proof}
Let $\vec{Q}\,\vec{z}=\lambda\,\vec{z}$. Then for
$j=1,2\ldots,C$:
\[ y_j\,z_j - y_j\,\vec{y}^\trans\vec{z} = \lambda\, z_j\]
Hence for every $j$:
\[(y_j-\lambda) z_j = y_j\langle\vec{y},\vec{z}\rangle.\]

\noindent\textbf{Case $\lambda = 0$:} Then $y_j(z_j-\langle\vec{y},\vec{z}\rangle) = 0$. Therefore
either $y_j = 0$ and $z_j$ is arbitrary, or
$z_j=\langle\vec{y},\vec{z}\rangle$, which is constant for $j\in P$. 

\noindent\textbf{Case $\lambda\neq 0$:} Let $c = \langle\vec{y},\vec{z}\rangle$. Hence for all $j$
\[(y_j-\lambda) z_j = c\,y_j.\]
Thus for all $j\notin P$
\[ -\lambda z_j = 0 \]
which implies $z_j=0$ for $j\notin P$.
Assume that for some $\ell\in P$ $\lambda= y_\ell$ is an eigenvalue. We have
\[0 = (y_\ell-\lambda) z_\ell = c\,y_\ell\]
and $y_\ell\neq 0$. Therefore $c = 0$ and
\[(y_j-\lambda)\,z_j = 0 \]
for all $j$. Therefore, $z_j=0$ for those $j$ that $y_j\neq \lambda$,
and $z_j$ is arbitrary for those $j$ that $y_j=\lambda=y_\ell$. In addition
we must have
\[ c = \langle\vec{y},\vec{z}\rangle = \sum_{j\in P} y_j\,z_j =
  \sum_{j\in P} \lambda z_j = \lambda \sum_{j\in P} z_j = 0\]
i.e. $\vec{z}\perp \mathbbm{1}$.
We can see that the dimension of the eigenspace is by one less than the number
of times $\lambda$ appears as a coordinate in vector $\vec{y}$. Thus, $\lambda=y_j$ is an eigenvalue if $y_j$ has multiplicity of at least two in the multi-set $\{y_j\}_{j=1}^C$. 

Let now $\lambda \neq y_\ell$ for all $\ell\in P$. Thus for all $j\in P$:
\[ z_j = c \frac{y_j}{y_j -\lambda}. \]
Therefore
\[ c = \langle\vec{y},\vec{z}\rangle = c \sum_{j\in P} \frac{y_j^2}{y_j-\lambda}. \]
As $c\neq 0$ (else $\vec{z}=0$), we have
\[ \sum_{j\in P} \frac{y_j^2}{y_j-\lambda} = 1. \]
Clearly, $\lambda=0$ is a root of this equation. For $\lambda < 0$ the left-hand side is $>1$.
Therefore, all other roots $\lambda$ are positive. 
Let $\{y_j\}_{j\in P} = \{a_1,a_2,\ldots a_r\}\subseteq (0,1)$, where $a_k$ repeats $\nu_k$ times.
Then
\[ f(\lambda):=\sum_{s=1}^r \frac{\nu_s\,a_s^2}{a_s-\lambda} = 1. \]
As $f'(\lambda) > 0$, the function $f(\lambda)$ is increasing on each
interval $(-\infty,a_1)$, $(a_1,a_2)$, $\ldots$, $(a_{r-1},a_r)$,
$(a_r,\infty)$. Every interval, except for the first and last one is
mapped to $(-\infty,\infty)$, hence, there is a root $\lambda$ in it,
yielding $r-1$ roots. Also $f( (-\infty,a_1) ) = (0,\infty)$, yielding
root $\lambda=0$ and $f((a_r,\infty)) = (-\infty, 0)$, yielding no
roots. Clearly, all real roots are simple. Also, $\lambda$ is a simple eigenvalue,
as it is clear the $\vec{z}$ is unique up to a constant. 
\end{proof}

\printbibliography

\typeout{get arXiv to do 4 passes: Label(s) may have changed. Rerun}
\end{document}
